
\documentclass{article}

\usepackage{microtype}
\usepackage{graphicx}
\usepackage{subfigure}
\usepackage{booktabs} 

\usepackage{hyperref}

\usepackage{soul}

\usepackage{algorithm}
\usepackage{algpseudocode}



\usepackage[accepted]{icml2025}

\usepackage{amsmath}
\usepackage{amssymb}
\usepackage{mathtools}
\usepackage{amsthm}
\usepackage{multirow}


\usepackage[capitalize,noabbrev]{cleveref}

\crefname{figure}{Fig.}{Figs.}
\Crefname{figure}{Figure}{Figures}

\crefname{table}{Tab.}{Tabs.}
\Crefname{table}{Table}{Tables}

\crefname{section}{Sec.}{Secs.}
\Crefname{section}{Section}{Sections}

\crefname{algorithm}{Alg.}{Algs.}
\Crefname{algorithm}{Algorithm}{Algorithms}


\theoremstyle{plain}
\newtheorem{theorem}{Theorem}[section]
\newtheorem{proposition}[theorem]{Proposition}
\newtheorem{observation}[theorem]{Observation}

\theoremstyle{definition}
\newtheorem{definition}[theorem]{Definition}

\theoremstyle{remark}

\theoremstyle{example}
\newtheorem{example}[theorem]{Example}

\usepackage[textsize=tiny]{todonotes}

\usepackage{siunitx}  

\icmltitlerunning{LapSum: One Method to Differentiate Them  All}

\newcommand\B{\mathrm{B}}

\newcommand\SP{\mathcal{P}}
\def\lap{Lap}
\def\Lap{Lap}

\def\our{\mbox{$\mathrm{LapSum}$}}

\newcommand\Ftop[1]{{#1\text{-}\mathrm{Top}}}
\newcommand\Fsum[1]{{#1\text{-}\mathrm{Sum}}}
\newcommand\Frank[1]{{#1\text{-}\mathrm{Rank}}}
\newcommand\Fsort[1]{{#1\text{-}\mathrm{Sort}}}
\newcommand\Fper[1]{{#1\text{-}\mathrm{Perm}}}

\def\N{\mathbb{N}}

\def\R{\mathbb{R}}

\def\1{\mathds{1}}

\def\diag{\mathrm{diag}}
\def\softmax{\mathrm{softmax}}

\newcommand\il[1]{\langle #1 \rangle}

\newcommand{\topmin}{\mathop{\mathrm{topmin}}}
\newcommand{\topmax}{\mathop{\mathrm{topmax}}}

\usepackage{enumitem}
\setlist{nolistsep}

\expandafter\def\expandafter\normalsize\expandafter{%
    \normalsize
    \setlength\abovedisplayskip{5pt}
    \setlength\belowdisplayskip{5pt}
    \setlength\abovedisplayshortskip{3pt}
    \setlength\belowdisplayshortskip{3pt}
}


\begin{document}

\twocolumn[
\icmltitle{LapSum -- One Method to Differentiate Them  All: \\  Ranking, Sorting and Top-k Selection}



\icmlsetsymbol{equal}{*}

\begin{icmlauthorlist}
\icmlauthor{\L{}ukasz Struski}{yyy}
\icmlauthor{Micha\l{} B. Bednarczyk}{yyy,zzz}
\icmlauthor{Igor T. Podolak}{yyy}
\icmlauthor{Jacek Tabor}{yyy}

\end{icmlauthorlist}

\icmlaffiliation{yyy}{Faculty of Mathematics and Computer Science, Jagiellonian University, Krak\'ow, Poland}
\icmlaffiliation{zzz}{University of Illinois, Urbana-Champaign, USA}

\icmlcorrespondingauthor{\L{}ukasz Struski}{lukasz.struski@uj.edu.pl}
\icmlcorrespondingauthor{Micha\l{} B. Bednarczyk}{michalb3@illinois.edu}
\icmlcorrespondingauthor{Igor T. Podolak}{igor.podolak@uj.edu.pl}
\icmlcorrespondingauthor{Jacek Tabor}{jacek.tabor@uj.edu.pl}

\icmlkeywords{Machine Learning, ICML, top-k learning}
\vskip 0.3in
]



\printAffiliationsAndNotice{\icmlEqualContribution} 

\begin{abstract}
We present a novel technique for constructing differentiable order-type operations, including soft ranking, soft top-k selection, and soft permutations. Our approach leverages an efficient closed-form formula for the inverse of a function \our{}, defined as a sum of Laplace distributions. This formulation ensures low computational and memory complexity in selecting the highest activations, enabling losses and gradients to be computed in 
$O(n\log{}n)$ time. Moreover, \our{} can easily be parallelized, both with respect to time and memory.
Through extensive experiments, we demonstrate that our method outperforms state-of-the-art techniques for high-dimensional vectors and large 
$k$ values. Furthermore, we provide efficient implementations for both CPU and CUDA environments, underscoring the practicality and scalability of our method for large-scale ranking and differentiable ordering problems.
\end{abstract}

\section{Introduction}
\label{sec:intro}
Neural networks are trained using data through gradient descent, which requires models to be differentiable. However, common ordering tasks such as sorting, ranking, or top-k selection are inherently non-differentiable due to their piecewise constant nature. This typically prevents the direct application of gradient descent, which is essential for efficient learning from data. These challenges have become increasingly relevant in recent years~\citep{lapin2016lossfunctionstopkerror,blondel2020fast,petersen2022differentiable}. To address them, relaxations or approximations are needed to make such tasks compatible with neural network training. This can be achieved by smoothing the objective functions, introducing uncertainty into algorithms, softening constraints, or adding variability. The known methods include smoothed approximations~\cite{berrada2018smoothlossfunctionsdeep,garcin2022stochastic}, optimal based~\cite{cuturi2019differentiable,xie2020differentiable}, permutations~\cite{petersen2022differentiable}. Some of these solutions are often not closed form. 

\begin{figure}[t]
    \centering
    \includegraphics[width=\columnwidth]{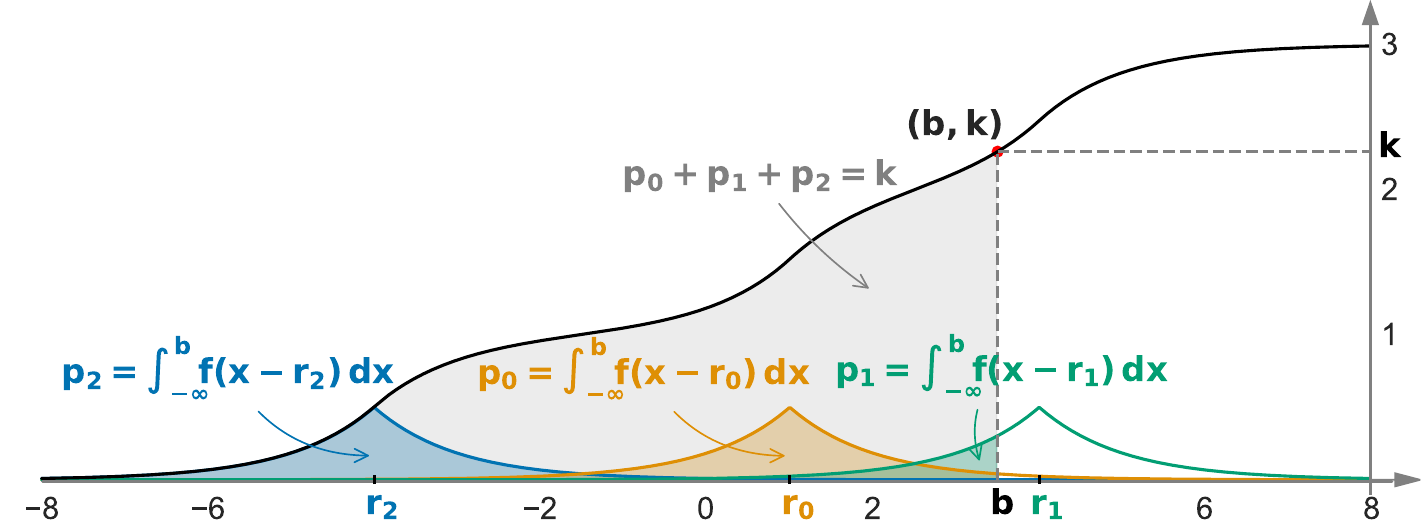}
    \caption{Scheme depicting the procedure for calculating the probabilities $p_i$ in the soft top-k selection algorithm for \our{}, given $n$ centers of Laplace distributions $r_i$ and a value $k < n$. All the formulae have a closed form and can be computed, jointly with derivatives, in $O(n\log{}n)$ time and $O(n)$ memory.}
    \label{fig:schemes_top-k}
\end{figure}

Challenges in addressing order related tasks arise from several factors. Among them are abrupt changes and instabilities when traversing regions. Time and memory issues often limit the solutions for large $n$ and $k$ values, where the aim lies in computing soft differenitiable top-k for large sequences of size $n$. There might also issues with scalability, parallel and batch processing. Techniques employed to solve these problems include relaxations and estimators, ranking regularizers, even learning-based ranking solutions. However, some problems remain unsolved or the existing methods are incomplete. E.g., efficiency in terms of time and memory usage may still be inadequate, impeding the solution of certain tasks~\cite{xie2020differentiable}. Frequently, there is a speed-precision trade-off. Several methods lack closed-form solutions, confining one to approximations or complicated calculations. Some methods do not yield probabilities~\cite{blondel2020fast}. Several do not yet support easy-to-use GPU methods.  

Our aim was to address the above-mentioned issues. The main achievement of this paper is the construction of a general theory, based on arbitrary density, such that incorporation of the Laplace distribution provides a closed-form, numerically stable, and efficient solutions to all previously mentioned problems. More precisely, our approach, called \our{}, is based on a sum of Laplace distributions, see the scheme~\cref{fig:schemes_top-k} for construction of soft top-k selection problem. We show that this approach gives a solution that is both theoretically sound and practical: the reduced $O(n\log{}n)$ time complexity is on par with that of sequence sorting, while the memory requirements are very limited in comparison to other available top-k solutions.

As the main contributions of this paper, we
\begin{itemize}[itemsep=5pt]
  \item propose a novel, theoretically sound simple closed-form solution, called \our{}, for all classical soft-order based problems\footnote{Our basic top-k solution's pseudo-code is just 26 lines long (see~\cref{app:pseudocode}).},
  \item prove and experimentally verify that \our{} has $O(n\log{}n)$ low time and $O(n)$ memory complexities, together with derivatives, outperforming within this aspect all other existing methods, see ~\cref{fig:time_memory_jn_2} for  comparison with other SOTA methods,
  \item offer easy to use code for \our{}, for both CPU and CUDA, which makes our approach feasible for large optimization problems thanks to efficient use of parallelization.\footnote{Full code is available at \href{https://github.com/gmum/LapSum}{github.com/gmum/LapSum}.} 
\end{itemize}


\begin{figure}[htb]
\centering\quad\;\includegraphics[width=.9\linewidth]{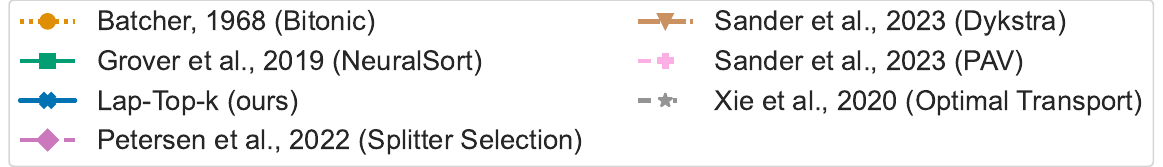} \\
    \includegraphics[width=\linewidth]{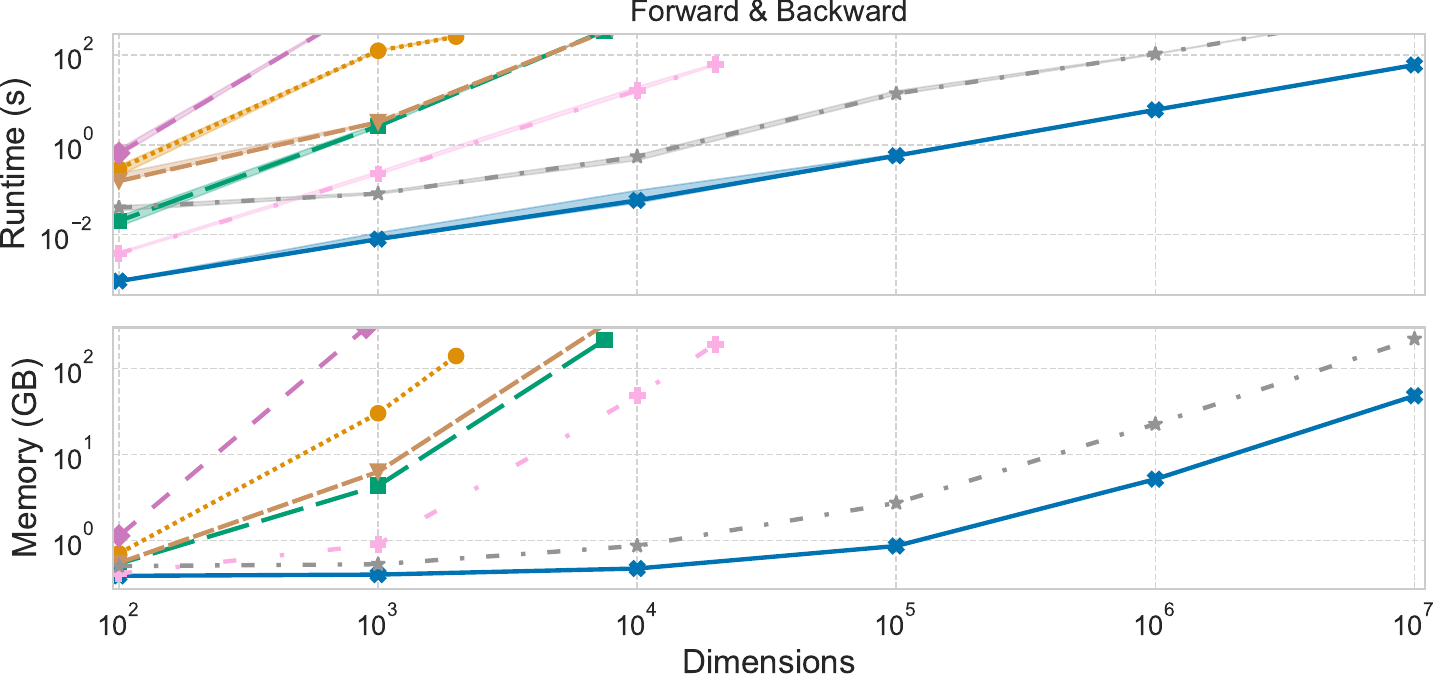} \\[0.5em]
    \includegraphics[width=\linewidth]{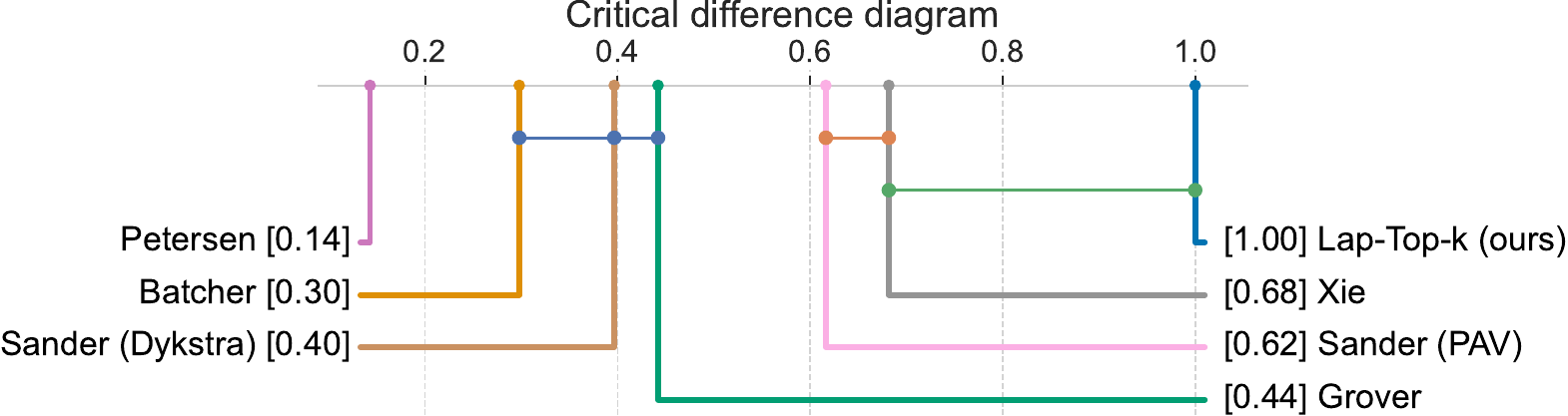}
    \caption{Upper: the relationship between the data dimension $n$ (horizontal axis) and the maximum memory usage and computation time (vertical axis), with $k=n/2$ in this figure. Computations for forward/backward processes shown here were performed on a CPU (see Appendix~\cref{fig:time_memory_jn_2_CPU,fig:time_memory_jn_2_GPU,fig:time_memory_jn_5_CPU,fig:time_memory_jn_5_GPU} for more). Below: the critical confidence diagram showing the statistical confidence of the above results~\cite{demsar2006statistical}. \our{} approach is comparable to the best few (better on the right, statistically comparable joined with horizontal segments). 
    }
    \label{fig:time_memory_jn_2}
\end{figure}


\section{Related Work}
\label{sec:rel_works}
The strategy of direct \textbf{top-k training} to optimize network performance, rather than straightforward use of top-1, was first introduced in fast top-k SVM~\cite{lapin2016lossfunctionstopkerror}. The proposition called for the construction of differentiable loss functions to enable gradient learning.

Several methodologies are available. Some focus on \textbf{smooth approximations}. \citet{berrada2018smoothlossfunctionsdeep} proposed a \textbf{smooth top-k} to more accurately approximate the top-k loss and facilitate a direct technique for top-k gradient learning. Although these and comparable methods for direct gradient-based top-k learning yield precise results, they can be computationally intensive~\citep{garcin2022stochastic}. Some smooth approximations can speed up optimization, but may sacrifice accuracy.

Multiple methods employ the theoretically sound \textbf{optimal transport} framework, known for delivering precise gradients~\citep{cuturi2019differentiable,xie2020differentiable,masud2023multivariatesoftrank}. However, these techniques might become computationally intensive when applied to large-scale tasks.


Various studies have expanded on these foundational concepts. \textbf{Ranking and learning to rank} techniques have evolved to deal with large-scale challenges, such as high dimensionality and handling partial derivatives~\cite{xie2020differentiable, zhouICCV2021}. Significant advances have been made in the proposal of methods that train efficiently with multiple labels through \textbf{permutation training}~\cite{blondel2020fast,petersen2022monotonic,petersen2022differentiable, shvetsova2023learning}, sorting methods~\citep{prillo2020softsort,petersen2022monotonic}, and the development and extraction of efficient \textbf{sparse networks} for use in expert systems~\citep{sander2023fast}. These approaches aim to improve computational efficiency, scalability, applicability to deep architectures, and effective hardware implementation.

Multiple studies demonstrate \textbf{effective applications} of top-k learning techniques in pattern recognition and feature extraction, the creation of sparse networks and feature extraction, and recommender systems, such as through the use of differentiable logic gates~\cite{zhao2019explicitsparsetransformer,hoefler2021sparsity,chen2023learningsparsetransformer,xu2023efficient,chen2021topkoffpolicy}.

Theoretical studies establish links between top-k learning and classical statistical learning, the margin methodology, and a broader framework, offering strategies to optimize the k~value~\citet{cortes2024cardinality,mao2024topkclassification}.


\section{General soft-order theory based on sums of cumulative density functions}
We begin by demonstrating how a cumulative density function can be used to construct soft analogues of hard, order-based problems such as top-k selection and permutations. Although the proposed approach is primarily theoretical and not computationally efficient for a general density, in the next section we show that it can be efficiently computed with complexity $O(n \log n)$ in the specific case of the Laplace distribution.

\subsection{Function $\Fsum{F}$}
\label{sec:f-sum}
We base our construction on an arbitrary even and strictly positive density function $f$, with $F$ denoting its cumulative density function, so that $F(-x)=1-F(x)$ and consequently $F(0) = 1/2$. Additionally, for a scale parameter $\alpha \neq 0$ we put
$$
F_{\alpha}(x) = F\left(\frac{x}{\alpha}\right).
$$
Observe that $F_\alpha(x) \to H(x)$ as $\alpha \to 0^+$, where $H$ denotes the Heaviside step function, with $H(0)=\frac{1}{2}$, and $F_\alpha(x) \to H(-x)$ as $\alpha \to 0^-$.

Given a sequence $(r_0,\ldots,r_{n-1}) \subset \R$, we consider the function
$$
\Fsum{F}_\alpha(r,x)=\sum_i F_\alpha(x-r_i).
$$
Clearly, the image of $\Fsum{F}_\alpha$ is the interval $(0,n)$.
It occurs that $\Fsum{F}_\alpha(r,x)$ is invertible as a function of $x$. This follows from the fact that since $f$ is a strictly positive density, $F_\alpha$ is a strictly increasing continuous function for $\alpha>0$ and strictly decreasing for $\alpha<0$. Consequently, $\Fsum{F}_\alpha(x,r)$ is a continuous strictly monotonous function, which is concluded in the following observation.

\begin{observation}
Let sequence $r=(r_0,\ldots,r_{n-1}) \in \R^n$ be given, and let $k \in (0,n), \alpha \neq 0$ be arbitrary. Then the equation
$$
\Fsum{F}_\alpha(r, x)=k
$$
has a unique solution $x$, which we denote by $x=\Fsum{F}^{-1}(k)$
\end{observation}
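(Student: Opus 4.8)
The plan is to prove existence and uniqueness by the standard argument for a continuous, strictly monotone real function: such a function is a bijection from $\R$ onto its image, so if the target value $k$ lies in that image, exactly one preimage exists. Everything needed for this is already supplied in the preceding discussion, so the task reduces to assembling three ingredients cleanly — continuity of $x \mapsto \Fsum{F}_\alpha(r,x)$, strict monotonicity in $x$, and the identification of the image as the open interval $(0,n)$ containing $k$ — and then invoking the intermediate value theorem together with strict monotonicity for uniqueness.

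First I would fix $\alpha > 0$. Since $f$ is a strictly positive density, $F$ is absolutely continuous and strictly increasing, hence $F_\alpha(x) = F(x/\alpha)$ is continuous and strictly increasing; each summand $x \mapsto F_\alpha(x - r_i)$ inherits both properties, and a finite sum of continuous strictly increasing functions is again continuous and strictly increasing. Thus $g(x) := \Fsum{F}_\alpha(r,x)$ is continuous and strictly increasing on all of $\R$. For the image, I would use the tail behaviour of the CDF: as $x \to -\infty$ every term $F_\alpha(x-r_i) \to 0$, so $g(x) \to 0$, and as $x \to +\infty$ every term $\to 1$, so $g(x) \to n$. By continuity and strict monotonicity the image of $g$ is exactly the open interval $(0,n)$. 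Since $k \in (0,n)$ lies in this image, the intermediate value theorem yields some $x$ with $g(x) = k$, and strict monotonicity forces this $x$ to be unique.

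The case $\alpha < 0$ I would reduce to the previous one rather than repeat the argument. Using the evenness of $f$, equivalently $F(-t) = 1 - F(t)$, one has $\Fsum{F}_{-\alpha}(r,x) = \sum_i F\!\left(-\tfrac{x-r_i}{\alpha}\right) = n - \Fsum{F}_{\alpha}(r,x)$, so solving $\Fsum{F}_{-\alpha}(r,x) = k$ is the same as solving $\Fsum{F}_{\alpha}(r,x) = n-k$; since $k \in (0,n)$ if and only if $n-k \in (0,n)$, existence and uniqueness transfer directly from the $\alpha>0$ case.

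I do not expect any genuine obstacle here: the statement is essentially a packaging of facts already established in the text (continuity, strict monotonicity for each sign of $\alpha$, and the image being $(0,n)$), so the only care required is the routine bookkeeping of the limits and the sign reduction. The substantive difficulty of the paper — producing an explicit closed form for the inverse $\Fsum{F}^{-1}$ and computing it, with derivatives, in $O(n\log n)$ time — is deferred to the Laplace-specific construction in the next section; this observation merely guarantees that the object $x = \Fsum{F}^{-1}(k)$ it will compute is well defined.
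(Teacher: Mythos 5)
Your argument is correct and matches the paper's own justification, which likewise rests on continuity and strict monotonicity of $x\mapsto\Fsum{F}_\alpha(r,x)$ (increasing for $\alpha>0$, decreasing for $\alpha<0$) together with the observation that its image is $(0,n)$. The only cosmetic difference is that you reduce the case $\alpha<0$ to $\alpha>0$ via the symmetry $F(-t)=1-F(t)$, whereas the paper handles it directly by noting that $F_\alpha$ is then strictly decreasing; both are routine and equivalent.
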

It should be noted, that in our reasoning we do not require $k$ to be integer.

\paragraph{Derivatives}
Note that the above functions are differentiable and that the derivatives can be computed efficiently. By $f_\alpha$ we denote the function $f_\alpha(x)=\frac{1}{\alpha}f(x/\alpha)$. Observe that, for $\alpha>0$, it is the density whose CDF is equal to $F_\alpha$. Then we have
\begin{equation}
\frac{\partial }{\partial x} \Fsum{F}_\alpha(r,x)=
\sum_i f_\alpha(x-r_i).
\end{equation}

Now we compute the derivatives of the inverse function $k\to\Fsum{F}_\alpha^{-1}(r,k)$. Given $k \in(0,n)$, let $b=b_\alpha(r,k)$ denote the unique solution to
$$
\Fsum{F}(b)=\sum_i{}F_\alpha(b-r_i)=k.
$$
To compute the derivative of the function $b$ with respect to $w$, we differentiate the above formula with respect to $k$ obtaining
$\frac{\partial b}{\partial k}\sum_i f_\alpha(b-r_i)=1$. Thus
\begin{equation}
\frac{\partial}{\partial k}\Fsum{F}_\alpha^{-1}(r,k)\;\;=\frac{\partial b}{\partial k}\;\;=\frac{1}{\sum_i f_\alpha(b-r_i)}.
\end{equation}

\subsection{Soft rankings and soft orderings}
We show how $\Fsum{F}$ may be applied to provide solutions to differentiable soft order problems.
Let us start with the soft rankings problem. 

\paragraph{Soft rankings}
\begin{figure}[htb]
    \centering
    \includegraphics[width=.9\linewidth]{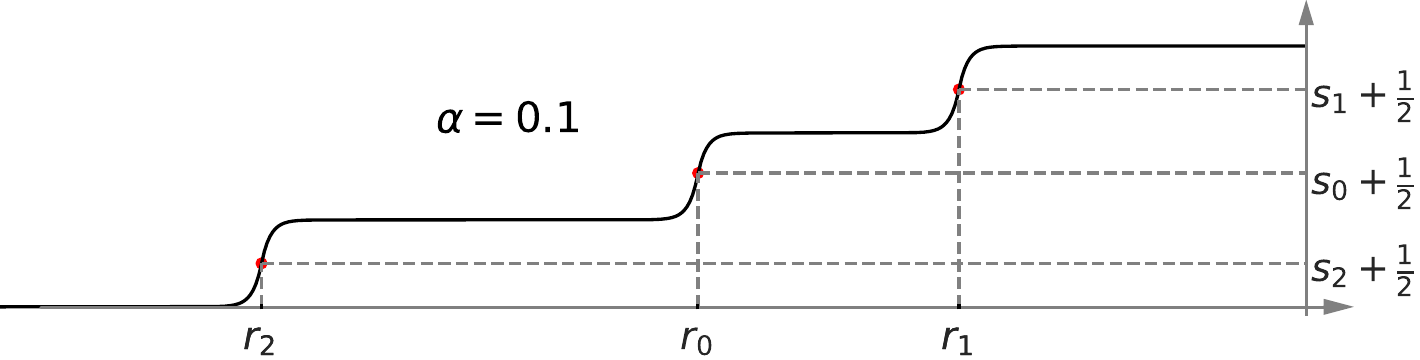}
    \caption{The computation of the soft ranking based on $F$ given as CDF of Laplace distribution, here $(s_0,s_1,s_2)=\Frank{F}_{\alpha=1}(r_0,r_1,r_2)$.}
    \label{fig:rank_alpha}
\end{figure}
For $\alpha\neq 0$, fixed sequence $(r_i)$ and an arbitrary $j \in \{0,\ldots,n-1\}$ we define
$$
\Frank{F}_\alpha(r_j)=\Fsum{F}_\alpha(r,r_j)-\tfrac{1}{2},
$$
see Fig. \ref{fig:rank_alpha}. 
By a soft ranking of whole sequence $r=(r_i)$ we understand
$$
\Frank{F}_\alpha(r)=(\Frank{F}_\alpha(r_i))_i.  
$$

First, we show that  
$
\Frank{F}_\alpha(r_j) \subset (0,n-1).
$
This follows directly from the formula
\begin{equation} \label{eq:r}
\Frank{F}_\alpha(r_j)=\sum_{l \neq j}F_\alpha(r_j-r_l),    
\end{equation}
which is a consequence of the equality $F_\alpha(r_j-r_j)=F(0)=\tfrac{1}{2}$.

One can easily observe that $\Frank{F}_\alpha$ is permutation-equivariant, that is, a permutation of the input will result in a similar permutation of the output.
By $s^+_i$ we denote the rank of $r_i$ in sequence $r$ (with respect to standard order), i.e. 
$$
s^+_i=\mathrm{card}\{j:r_j<r_i\}=\sum_{j \neq i}H(r_i-r_j).
$$
With $s_i^-$ we denote the rank of element $r_i$ in the reverse order of $r$, that is, $s_i^-=(n-1)-s_i^+$.
To prove that $\Frank{F}_\alpha$ is a correct soft version of ranking operation, we need to show that in the limiting case $\alpha \to 0^{\pm}$ we obtain the hard ranking.

\begin{theorem}
For a sequence $r=(r_i)$ of pairwise distinct elements we have
$$
\Frank{F}_\alpha(r_j) \to  s_j^{\pm} \text{ as }\alpha \to 0^{\pm},
$$
for arbitrary $j \in \{0,\ldots,n-1\}$.
\end{theorem}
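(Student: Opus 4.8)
The plan is to start from the closed-form expression \eqref{eq:r}, namely $\Frank{F}_\alpha(r_j) = \sum_{l \neq j} F_\alpha(r_j - r_l)$, which reduces the statement to a finite sum of scalar quantities. Since this sum has only finitely many terms (exactly $n-1$), the limit of the sum equals the sum of the limits, so it suffices to analyze each term $F_\alpha(r_j - r_l)$ separately as $\alpha \to 0^{\pm}$.

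First I would invoke the hypothesis that the $r_i$ are pairwise distinct. This guarantees that for every $l \neq j$ the difference $r_j - r_l$ is nonzero, so its sign is well-defined and we never land on the ambiguous value $F(0) = 1/2$ in the limit. Recalling from the excerpt that $F_\alpha(x) = F(x/\alpha) \to H(x)$ as $\alpha \to 0^+$ and $F_\alpha(x) \to H(-x)$ as $\alpha \to 0^-$, each nonzero argument is therefore sent cleanly to $0$ or $1$.

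For the $\alpha \to 0^+$ case, the term $F_\alpha(r_j - r_l)$ tends to $1$ exactly when $r_j - r_l > 0$, i.e. $r_l < r_j$, and to $0$ otherwise; summing over $l \neq j$ yields $\mathrm{card}\{l : r_l < r_j\} = s_j^+$. Symmetrically, for $\alpha \to 0^-$ the sign of $(r_j - r_l)/\alpha$ flips, so each term tends to $1$ exactly when $r_l > r_j$, and the sum collapses to $\mathrm{card}\{l : r_l > r_j\} = (n-1) - s_j^+ = s_j^-$.

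I do not anticipate a genuine obstacle: the only point requiring care is the appeal to pairwise distinctness to rule out zero arguments, which is precisely where the value $H(0) = \tfrac{1}{2}$ would otherwise spoil convergence to an integer rank. Everything else is the routine exchange of a finite sum with a limit together with the substitution of the known pointwise limits of $F_\alpha$.
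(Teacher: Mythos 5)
Your proposal is correct and follows essentially the same route as the paper's own proof: reduce via the identity $\Frank{F}_\alpha(r_j)=\sum_{l\neq j}F_\alpha(r_j-r_l)$, exchange the limit with the finite sum, and substitute the pointwise Heaviside limit of $F_\alpha$. If anything, you are slightly more explicit than the paper in flagging where pairwise distinctness is used (to avoid the $H(0)=\tfrac12$ value) and in writing out the $\alpha\to 0^-$ case, which the paper dismisses as ``similar.''
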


\begin{proof}
Let us prove the limiting formula above for $\alpha>0$ (the case $\alpha<0$ is similar). Since $F_\alpha$, for $\alpha \to 0^+$, converges to a Heaviside function, by \eqref{eq:r} we get
\[
\begin{aligned}
    \lim\limits_{\alpha \to 0^+} \Frank{F}_\alpha(r_j)& 
    = \lim\limits_{\alpha \to 0^+}  \displaystyle\sum\limits_{i \neq j} F_\alpha(r_j - r_i) \\[0ex]
    &= \displaystyle\sum\limits_{i\neq j} H(r_j - r_i) =  s_j^+. 
    \hspace{0.01em}\qedhere
\end{aligned}
\]
\end{proof}

Thus, with $\alpha \to 0^+$, we obtain the hard ranking of the sequence $r$ in increasing order, while for $\alpha \to 0^-$ we obtain the ranking in decreasing order.


\paragraph{Soft orderings}
Soft ordering can be intuitively seen as an inverse operator to soft ranking. Given a sequence $(r_i)$ we define
$$
(\Fsort{F}_\alpha(r))_l=\Fsum{F}^{-1}_\alpha(\tfrac{1}{2}+l) \text{ for } l\in \{0,\ldots,n-1\}.
$$
Informally we can write
$$
\Fsort{F}_\alpha(r)=\Fsum{F}^{-1}_\alpha\left(\tfrac{1}{2}+(0,1,\ldots,n-1)\right).
$$
It can now be readily demonstrated that for a sequence $r$ with distinct elements, the aforementioned operation has the following properties, see Fig. \ref{fig:sort_alpha}:
\begin{itemize}[itemsep=5pt]
    \item it is permutation invariant,
    \item $\Fsort{F}_\alpha(r)$ converges to the sorted (with respect to increasing order) $r$ for $\alpha \to 0^+$,
    \item $\Fsort{F}_\alpha(r)$ converges to sorted $r$ (with respect to decreasing order) as $\alpha \to 0^-$.
\end{itemize}

\begin{figure}[htb]
    \centering
    \includegraphics[width=.9\linewidth]{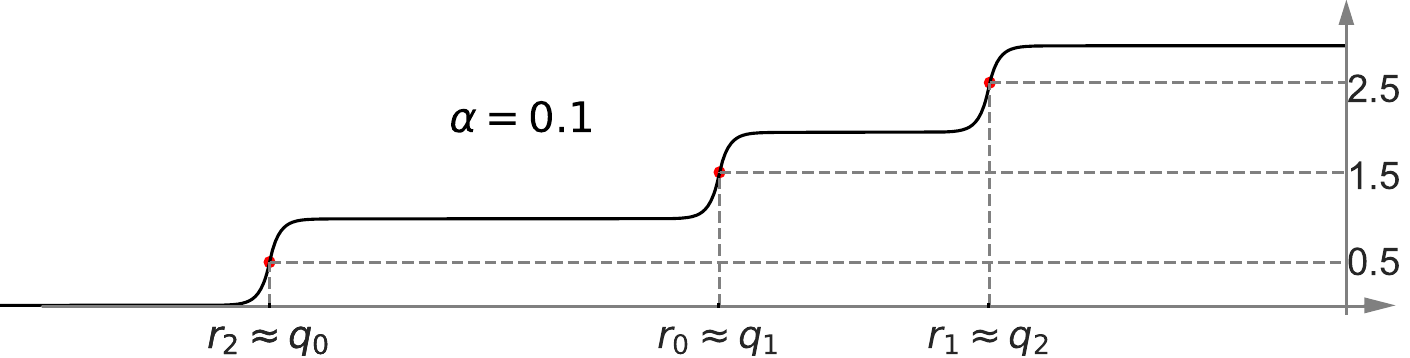}
    \caption{The result of soft sorting with $F$ given by CDF of Laplace distribution. Observe that for small $\alpha$ the constructed sequence $(q_0,q_1,q_2)=\Fsort{F}_{\alpha=1}(r_0,r_1,r_2)$ practically coincides with the sorted sequence $r$.}
    \label{fig:sort_alpha}
\end{figure}

\subsection{Soft top-k}
We proceed to the construction of soft differentiable version of top-k selection problem. 
The aim is to construct a differentiable version of the hard top-k (formally max~top-k) operation which, for a sequence $r=(r_i) \in \R^n$, returns a binary sequence $(p_i) \in \{0,1\}^n$ that has a value $1$ for the indices with k highest values of $r$, and zero for all other indices. Clearly, $\sum_i p_i=k$.

In our approach, we can choose an arbitrary $k \in (0,n)$, where we underline that we do not require $k$ to be an integer. Using $\Fsum{F}$ we construct the parameterization of the space
$$
\Delta_k=\{[p_0,\ldots,p_{n-1}] \in (0,1)^n : \sum_i p_i=k\}.
$$
Formally, we define
$$
\begin{array}{c}
\Ftop{F}_\alpha(r,k) = (F_\alpha(b-r_0), \ldots,F_\alpha(b-r_{n-1})) \\
\text{ where }b=\Fsum{F}^{-1}_\alpha(r,k),
\end{array}
$$
see \cref{fig:schemes_top-k} for a visualization of this approach.

We show that this gives us the correct solution for the soft top-k problem.

\begin{theorem} \label{th:0}
For every $k\in(0,n)$ and $r\in\R^n$ we have
\begin{equation} \label{eq:t}
\Ftop{F}_\alpha(r,k)\in\Delta_k.
\end{equation}
Moreover, if $k$ is integer and the elements of $r$ are pairwise distinct, then
\begin{itemize}[itemsep=5pt]
    \item $\Ftop{F}_\alpha(r,k)\to\topmin_k(r) $ as $\alpha\to0^+$,
    \item $\Ftop{F}_\alpha(r,k)\to\topmax_k(r)$ as $\alpha\to0^-$.
\end{itemize}
\end{theorem}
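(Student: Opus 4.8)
The first claim \eqref{eq:t} is essentially immediate, so I would dispatch it first. Write $b=\Fsum{F}^{-1}_\alpha(r,k)$. Each coordinate $F_\alpha(b-r_i)=F((b-r_i)/\alpha)$ lies in $(0,1)$ because $F$ is the strictly increasing continuous CDF of a strictly positive density, hence a bijection from $\R$ onto $(0,1)$; and the coordinates sum to $\sum_i F_\alpha(b-r_i)=\Fsum{F}_\alpha(r,b)=k$ by the defining property of $b$. Thus $\Ftop{F}_\alpha(r,k)\in\Delta_k$ for every $k\in(0,n)$.

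The substance is the limiting statement, and I would treat $\alpha\to0^+$ (the case $\alpha\to0^-$ being symmetric). Relabel the data by its order statistics $r_{(0)}<\cdots<r_{(n-1)}$, so that $\topmin_k(r)$ is the indicator of the $k$ smallest entries, i.e.\ those with $r_i\le r_{(k-1)}$. Writing $b_\alpha=\Fsum{F}^{-1}_\alpha(r,k)$, my key reduction is that it suffices to establish the two scaled limits
\[
\frac{b_\alpha-r_{(k-1)}}{\alpha}\to+\infty
\quad\text{and}\quad
\frac{b_\alpha-r_{(k)}}{\alpha}\to-\infty
\qquad(\alpha\to0^+).
\]
Indeed, for any index with $r_i\le r_{(k-1)}$ we then have $(b_\alpha-r_i)/\alpha\ge(b_\alpha-r_{(k-1)})/\alpha\to+\infty$, so $F_\alpha(b_\alpha-r_i)\to1$; and for any index with $r_i\ge r_{(k)}$ we have $(b_\alpha-r_i)/\alpha\le(b_\alpha-r_{(k)})/\alpha\to-\infty$, so $F_\alpha(b_\alpha-r_i)\to0$. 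This is exactly $\topmin_k(r)$, and the point of the reduction is that it avoids having to pin $b_\alpha$ down to a single interior limit point.

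To prove the two scaled limits I would first show $b_\alpha$ stays bounded: evaluating the increasing function $g_\alpha(x)=\Fsum{F}_\alpha(r,x)$ at $r_{(n-1)}+1$ and at $r_{(0)}-1$ gives values tending to $n$ and to $0$ respectively, so $g_\alpha(b_\alpha)=k\in(0,n)$ traps $b_\alpha$ in a fixed compact interval for small $\alpha$. Then I argue by subsequences: any subsequence admits a convergent sub-subsequence $b_{\alpha_m}\to b_0$. If $b_0\notin\{r_i\}$, then $F_{\alpha_m}(b_{\alpha_m}-r_i)\to H(b_0-r_i)$ for every $i$, and passing to the limit in $g_{\alpha_m}(b_{\alpha_m})=k$ forces $\mathrm{card}\{i:r_i<b_0\}=k$, i.e.\ $b_0\in(r_{(k-1)},r_{(k)})$, whence both scaled quantities clearly diverge with the right sign. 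The only delicate situation, and the step I expect to be the main obstacle, is when $b_0$ coincides with a data point $r_{(j)}$: there the single term $F_{\alpha_m}(b_{\alpha_m}-r_{(j)})$ may converge to an arbitrary $\lambda\in[0,1]$, and the limit identity reads $j+\lambda=k$. Here integrality of $k$ saves the day, forcing $\lambda\in\{0,1\}$ and $j\in\{k-1,k\}$; in either boundary case a direct check shows the relevant scaled quantity still diverges correctly (e.g.\ $b_0=r_{(k-1)}$ forces $\lambda=1$, i.e.\ $(b_{\alpha_m}-r_{(k-1)})/\alpha_m\to+\infty$, while $(b_{\alpha_m}-r_{(k)})/\alpha_m\to-\infty$ since its numerator tends to $r_{(k-1)}-r_{(k)}<0$). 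Since both scaled quantities diverge with the correct sign along every convergent subsequence, the two limits hold in full.

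Finally, for $\alpha\to0^-$ the function $F_\alpha$ is strictly decreasing with $F_\alpha(x)\to H(-x)$, so $g_\alpha$ is decreasing but the same boundedness and subsequence argument applies with the roles reversed: the limiting identity becomes $\mathrm{card}\{i:r_i>b_0\}=k$, the entries with $r_i>b_0$ are exactly the $k$ largest, and $F_\alpha(b_\alpha-r_i)\to H(r_i-b_0)$, which is the indicator of the $k$ largest entries, namely $\topmax_k(r)$. This completes the plan.
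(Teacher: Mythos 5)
Your proof is correct, and it organizes the limiting case differently from the paper. The paper (Appendix, Theorem~\ref{app:th:1}) works directly with the coordinates: assuming $F_{\alpha_m}(b_{\alpha_m}-r_j)\ge\varepsilon$ for some $j\ge k$ (indices sorted increasingly), monotonicity of $F$ forces $F_{\alpha_m}(b_{\alpha_m}-r_i)\to 1$ for every $i<k$, so the total sum exceeds $k+\varepsilon$ in the limit, contradicting $\Fsum{F}_\alpha(r,b_\alpha)=k$; the coordinates with $j<k$ are then pinned to $1$ by the same sum constraint. You instead localize the threshold itself: boundedness of $b_\alpha$ plus a subsequence/limit-point classification yields $(b_\alpha-r_{(k-1)})/\alpha\to+\infty$ and $(b_\alpha-r_{(k)})/\alpha\to-\infty$, with the degenerate case $b_0=r_{(j)}$ resolved by integrality of $k$ (forcing $\lambda=k-j\in\{0,1\}$). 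Both arguments ultimately rest on the same two ingredients -- the constraint $\sum_i F_\alpha(b_\alpha-r_i)=k$ and strict monotonicity of $F$ -- and both are complete. The paper's version is shorter because it never needs to control $b_\alpha$ as a point (no compactness or limit-point extraction); yours buys a more transparent picture of the mechanism, showing exactly where the threshold sits relative to the order statistics $r_{(k-1)}$ and $r_{(k)}$, and it makes explicit the role of integrality of $k$, which the paper uses only implicitly by counting the indices $i<k$.
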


\begin{proof}
Observe that \eqref{eq:t} follows directly from the fact that $b=\Fsum{F}^{-1}_\alpha(r,k)$, and thus $\sum_i F_\alpha(b-r_i)=k$. 

Due to the limited space, the proof of the limiting case is given in the Appendix~\Cref{app:th:1}.
\end{proof}

\paragraph{Invertibility of $\Fsum{F}$} One can observe that the function $\Ftop{F}$ has properties similar to that of softmax. 
In particular, if two outputs are equal, then the inputs are equal modulo some constant translation. Namely, by the invertibility of $F$ we obtain that
if $\Ftop{F}_\alpha(r,k)=\Ftop{F}_\alpha(\bar r,k)$ then
$$
\bar{r}_i =r_i+(\bar{b}-b) \text{ for every }i,
$$
where $b=\Fsum{F}_\alpha^{-1}(r,k)$, $\bar{b}=\Fsum{F}_\alpha^{-1}(\bar{r},k)$.
This implies, in particular, that similarly to softmax, $\Ftop{F}_\alpha$ with domain restricted to $\{r=(r_i)\in\R^n:\sum_i r_i=0\}$ is invertible.


\subsection{Soft permutations}
The task of the soft permutation problem is to generalize the hard permutation matrix to a differentiable double stochastic matrix. 

Recall that the \textit{permutation matrix} corresponding to a permutation $(s_i)_{i=1..n}$ of the indices $(1,\ldots,n)$ is a square binary matrix that represents the reordering of elements in the sequence. Precisely, the permutation matrix $P$ is a matrix $n\times{}n$ where:
\begin{itemize}[itemsep=5pt]
    \item[-] each row and column contains exactly one entry of \(1\), and all other entries are \(0\),
    \item[-] if \(P\) is applied to a vector \(v\), the result is a reordering of \(v\) according to \((s_i)\).
\end{itemize}
To construct $P$, first create a zero matrix $n \times n$, and then, for each $i \in \{1, 2, \dots, n\}$, place a $1$ in the $(i, p_i)$-th position.

\begin{example}
Let \((s_i) = (3, 1, 2)\). The permutation matrix is:
$P = \left[\begin{smallmatrix}
    0 & 0 & 1 \\
    1 & 0 & 0 \\
    0 & 1 & 0
\end{smallmatrix}\right]$.
\end{example}

For a sequence $(r_i)$ of pairwise distinct scalars, by its (hard) permutation matrix, we understand the permutation matrix of $(s_i)$, where $s_i$ is the rank order of $r_i$ in the ordered~$r$. 

By $\B_n$ we denote the space of all doubly stochastic matrices of size $n \times n$, where we recall that a doubly stochastic matrix is a square matrix of non-negative real numbers where each row and each column sum up to 1.

\paragraph{Soft permutation task}
The aim of soft permutation task is to define a differentiable function $\SP:\R^n \to M_{n \times n}$, so that $\SP(r) \in \B_n$ for $r \in \R^n$ and that hard permutation can be obtained as a limiting case of soft permutation. We consider only $\alpha>0$.

\begin{figure}[t]
    \centering
    \includegraphics[width=\columnwidth]{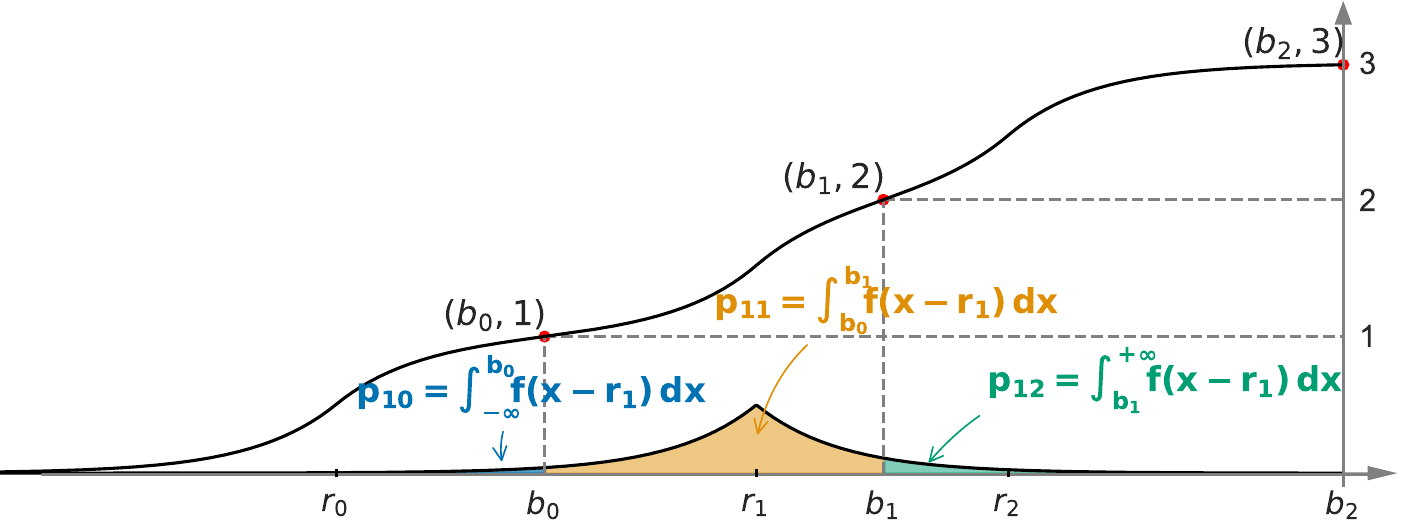}
    \caption{Illustration of probability calculation for sort/permutation method. We show the underlying processes for generating probabilities under given conditions.}
    \label{fig:schemes}
\end{figure}

\begin{definition}
Given $r=(r_0,\ldots,r_{n-1}) \in \R^n$ and $\alpha>0$ we define
$$
\Fper{F}_\alpha(r)=[F_\alpha(b_{i+1}-r_j)-F_\alpha(b_{i}-r_j)]_{i,j},
$$
where $b_{-1}=-\infty$, $b_i=F^{-1}(i)$ for $i=0,\dots,n-1$, $b_n=\infty$. 
\end{definition}

Visualization of the definition is given in Fig. \ref{fig:schemes}.
We proceed to theorem, which gives justification of the defined function.

\begin{theorem}
Let $r \in \R^n$ be arbitrary and let $\alpha>0$. Then 
$$
\Fper{F}_\alpha(r) \in \B_n.
$$ 
Moreover, if $r$ has pairwise distinct elements, then 
$$
\Fper{F}_\alpha(r) \to 
\mathit{permutation\_matrix}(r) \text{ as }\alpha\to0^+.
$$
\end{theorem}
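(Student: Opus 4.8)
The plan is to treat the two claims separately. Membership in $\B_n$ is a direct telescoping computation valid for every $\alpha>0$, while the limit requires showing that, for pairwise distinct $r$, the finite breakpoints $b_i$ get squeezed strictly between consecutive order statistics as $\alpha\to0^+$, after which the Heaviside limit of each entry can be read off. Throughout I would use the two defining properties of the breakpoints: the relation $\Fsum{F}_\alpha(r,b_i)=\sum_l F_\alpha(b_i-r_l)=i$ for the finite ones, together with the boundary behaviour $F_\alpha(\cdot-r_j)\to 0$ at $-\infty$ and $\to1$ at $+\infty$, so that $\Fsum{F}_\alpha(r,\cdot)$ increases from $0$ to $n$.

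First I would establish $\Fper{F}_\alpha(r)\in\B_n$. Non-negativity of the entries $F_\alpha(b_{i+1}-r_j)-F_\alpha(b_i-r_j)$ is immediate: for $\alpha>0$ the map $F_\alpha$ is strictly increasing and $\Fsum{F}_\alpha^{-1}(r,\cdot)$ is strictly increasing, so $b_i<b_{i+1}$ and the entry is positive. Summing a fixed row over the column index $j$ and using $\sum_j F_\alpha(b_i-r_j)=\Fsum{F}_\alpha(r,b_i)=i$ yields $(i+1)-i=1$, with the two extreme rows handled by the boundary values $0$ and $n$. Summing a fixed column over the row index $i$ instead telescopes, leaving only the value of $F_\alpha$ at the largest breakpoint ($+\infty$, giving $1$) minus its value at the smallest breakpoint ($-\infty$, giving $0$), hence $1-0=1$. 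Every row and column therefore sums to one and the matrix is doubly stochastic; this part is routine.

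The heart of the proof is the limit. Write $r_{(1)}<\dots<r_{(n)}$ for the sorted entries. The key lemma is that for all sufficiently small $\alpha>0$ each finite breakpoint satisfies $r_{(i)}<b_i<r_{(i+1)}$. To see this I would evaluate $\Fsum{F}_\alpha(r,\cdot)$ at the order statistics: by the symmetry $F(0)=\tfrac12$ one gets $\Fsum{F}_\alpha(r,r_{(i)})\to(i-1)+\tfrac12=i-\tfrac12$ and $\Fsum{F}_\alpha(r,r_{(i+1)})\to i+\tfrac12$ as $\alpha\to0^+$; since $b_i$ is the unique solution of $\Fsum{F}_\alpha(r,\cdot)=i$ and this function is strictly increasing, the strict inequalities $i-\tfrac12<i<i+\tfrac12$ force $r_{(i)}<b_i<r_{(i+1)}$ for small $\alpha$. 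Granting this, fix a column $j$ and let $m=\mathrm{card}\{l:r_l<r_j\}$ be the rank of $r_j$. Using $F_\alpha(x)\to H(x)$ together with the trapping, one has $F_\alpha(b_i-r_j)\to1$ when $i\ge m+1$ and $\to0$ when $i\le m$, so the entry $F_\alpha(b_{i+1}-r_j)-F_\alpha(b_i-r_j)$ in row $i$ tends to $1$ precisely when $i=m$ and to $0$ otherwise. Thus the limit matrix carries, in each column $j$, a single unit entry in row $\mathrm{card}\{l:r_l<r_j\}$, which is exactly the rank-order permutation matrix of $r$, namely $\mathit{permutation\_matrix}(r)$. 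The only real obstacle is this key lemma, i.e.\ controlling where the breakpoints land; the clean resolution is the half-integer gap supplied by $F(0)=\tfrac12$, which makes $\Fsum{F}_\alpha(r,\cdot)$ tend to $i\pm\tfrac12$ at the order statistics rather than to the integer target $i$, keeping it bounded away from $i$ there and pinning $b_i$ strictly inside the gap. Everything else reduces to telescoping and the pointwise convergence $F_\alpha\to H$ already established earlier.
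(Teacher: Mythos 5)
Your treatment of the first claim is correct and is essentially the paper's own computation: the row sums follow from $\Fsum{F}_\alpha(r,b_{i+1})-\Fsum{F}_\alpha(r,b_i)=(i+1)-i=1$, the column sums telescope to $F_\alpha(\infty)-F_\alpha(-\infty)=1$, and you additionally record positivity of the entries, which the paper leaves implicit. Your trapping lemma $r_{(i)}<b_i<r_{(i+1)}$ for small $\alpha$, obtained from $\Fsum{F}_\alpha(r,r_{(i)})\to i-\tfrac12$ and strict monotonicity, is also correct and is a genuinely different organizing device from the paper's argument.

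The gap is in the deduction from the trapping to the entrywise limits, and it sits at exactly the two rows that carry the content of the theorem. Fix $j$ and let $m$ be the rank of $r_j$, so $r_j=r_{(m+1)}$. For $i\le m-1$ and $i\ge m+2$ the trapping keeps $b_i-r_j$ bounded away from $0$ with a definite sign, so squeezing $F_\alpha(b_i-r_j)$ against $F_\alpha$ at a fixed nonzero point gives $0$ or $1$. But for $i=m$ you only know $r_{(m)}<b_m<r_j$, and for $i=m+1$ only $r_j<b_{m+1}<r_{(m+2)}$: here $b_i-r_j$ has the right sign but may tend to $0$ at a rate comparable to $\alpha$, and the pointwise convergence $F_\alpha\to H$ says nothing about $F_\alpha(x_\alpha)$ when $x_\alpha\to0$. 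For instance, nothing in the trapping excludes $b_m=r_j-\alpha$, for which $F_\alpha(b_m-r_j)=F(-1)\in(0,\tfrac12)$ rather than $0$. Since the single limiting unit entry is $F_\alpha(b_{m+1}-r_j)-F_\alpha(b_m-r_j)$, this is the heart of the proof, not a boundary technicality; it is precisely what the paper's \cref{app:th:1} handles with its $\varepsilon$-contradiction argument (if $F_{\alpha_\ell}(b_{\alpha_\ell}-r_j)\ge\varepsilon$ along a sequence, then $(b_{\alpha_\ell}-r_j)/\alpha_\ell\ge F^{-1}(\varepsilon)$, which pushes the full sum above its prescribed value). Your own setup can be completed instead of adopting that argument: from $\Fsum{F}_\alpha(r,b_m)=m$ and the known limits of the other $n-2$ summands one gets $F_\alpha(b_m-r_{(m)})+F_\alpha(b_m-r_{(m+1)})\to1$; setting $u=(b_m-r_{(m)})/\alpha>0$ and $v=(r_{(m+1)}-b_m)/\alpha>0$ this reads $F(u)-F(v)\to0$ while $u+v\to\infty$, and monotonicity of $F$ forces $u\to\infty$ and $v\to\infty$, which yields the two missing limits. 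As written, however, the proposal asserts these limits without justification.
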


\begin{proof}
We prove that $\Fper{F}_\alpha(r)$ is doubly stochastic. 
Let an arbitrary $j\in\{0,\ldots,n-1\}$. Then 
$$
\sum_{i=0}^{n-1}\left(F_\alpha(b_{i+1}-r_j)-F_\alpha(b_{i}-r_j)\right)
$$
$$
=F_\alpha(\infty)-F_\alpha(-\infty)=1.
$$
Now fix an arbitrary $i \in \{0,\ldots,n-1\}$. Then
\[
\begin{aligned}
&\sum_{j=0}^{n-1}\left(F_\alpha(b_{i+1}-r_j)-F_\alpha(b_{i}-r_j)\right)=\\
&{\;\;}=\Fsum{F}(r,b_{j+1})-\Fsum{F}(r,b_j)=(j+1)-j=1.
\end{aligned}
\]

The proof of the limiting case $\alpha \to 0^+$ can be done similarly to the reasoning from~\Cref{app:th:1} in the Appendix.
\end{proof}

\section{Theory behind \our{}}

In the previous section we have constructed a general soft-order theory based on an arbitrary cumulative density function $F$. However, such an approach is in practice unapplicable, since the complexity of evaluating $\Fsum{F}(r,x)$ for a sequence $r$ of size $n$ is in general quadratic. Moreover, to efficiently apply the theory, we need to be able to compute the inverse function $\Fsum{F}^{-1}(r,w)$. Finally, to use the model for neural networks, we would optimally need the derivatives computed in complexity $O(n\log{}n)$.

\subsection{Where the magic happens \dots}
\label{sec:laplace}
We demonstrate that all the above mentioned problems can be solved when $F$ represents the CDF of the standard Laplace distribution given by
$$
Lap(x)=\begin{cases}
\frac{1}{2}\exp(x) & \text{ for } x\leq 0,\\
1-\frac{1}{2}\exp(-x) & \text{ for } x >0.
\end{cases}
$$

The crucial role in our experiments will be played by the function $\Fsum{\Lap}$, which is the sum of the Laplace cumulative density functions. Recall that
for sequence $r=(r_i)_{i=0..n-1} \subset \R$, applying the definition for $\Fsum{F}$ for $F=\Lap$ we get
$$
\Fsum{\Lap}_{\alpha}(r,x)=\sum_{i=0}^{n-1} \Lap_{\alpha}(x-r_i),
$$
where $\Lap_{\alpha}(x)=\Lap(x/\alpha)$.

The function $\Fsum{\Lap}$ will be crucial in solving all previously mentioned soft-order problems -- we shall denote the family of methods constructed with the use of $\Fsum{\Lap}$ function as \our{}. We show two crucial facts which enable the efficient use of \our{} in soft-order problems:
\begin{itemize}
    \item we can evaluate $\Fsum{\Lap}$ at $n$ points in complexity $O(n \log n)$,
    \item the computation of $\Fsum{\Lap}$ can be parallelized,
    \item the function $\Fsum{\Lap}$ has inverse given by a closed-form formula.
\end{itemize}

\begin{theorem} \label{th:fast}
Suppose that $r=(r_i)_{i=0..n-1} \subset \R$ is an increasing sequence. We can compute the values of $\Fsum{\Lap}$ for an increasing sequence $x=(x_j)_{j=0..m-1} \subset \R$ with a complexity of $O(n+m)$.
\end{theorem}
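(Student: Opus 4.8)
The plan is to exploit the piecewise-exponential form of $\Lap$ to rewrite $\Fsum{\Lap}_\alpha(r,x)$ as a running count plus two exponentially weighted partial sums, and then to evaluate all $m$ query points with a single simultaneous merge of the two already-sorted sequences $r$ and $x$.

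First I would fix $\alpha>0$ (the case $\alpha<0$ is symmetric) and split each summand according to the sign of $x-r_i$. Writing $c=1/\alpha$, for $r_i\ge x$ we have $\Lap_\alpha(x-r_i)=\tfrac12 e^{c(x-r_i)}=\tfrac12 e^{cx}e^{-cr_i}$, while for $r_i<x$ we have $\Lap_\alpha(x-r_i)=1-\tfrac12 e^{-c(x-r_i)}=1-\tfrac12 e^{-cx}e^{cr_i}$. Collecting terms, for every $x$,
$$
\Fsum{\Lap}_\alpha(r,x)=\#\{i:r_i<x\}+\tfrac12 e^{cx}\!\!\sum_{i:r_i\ge x}\!\!e^{-cr_i}-\tfrac12 e^{-cx}\!\!\sum_{i:r_i<x}\!\!e^{cr_i}.
$$
Thus the only data-dependent quantities are the count $N(x)=\#\{i:r_i<x\}$, the suffix sum $S^{+}(x)=\sum_{i:r_i\ge x}e^{-cr_i}$, and the prefix sum $S^{-}(x)=\sum_{i:r_i<x}e^{cr_i}$.

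Next I would observe that, since $r$ is increasing, each of $N$, $S^{+}$, $S^{-}$ changes only as $x$ crosses some $r_i$, and it does so monotonically in the index. Because the query sequence $x$ is also increasing, I would run a single linear scan maintaining a pointer $p$ into $r$: to evaluate $x_j$ I advance $p$ past every $r_i<x_j$, and for each such $r_i$ I increment the count, subtract $e^{-cr_i}$ from $S^{+}$, and add $e^{cr_i}$ to $S^{-}$. Since $x$ is increasing the pointer never moves backwards, so each $r_i$ is processed once and each $x_j$ costs $O(1)$ plus the amortized advance; the initial value $S^{+}(x_0)$ is computed by one suffix-sum pass over $r$. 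This gives total cost $O(n+m)$, with $O(1)$ to evaluate the displayed formula at each $x_j$.

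The step I expect to be the main obstacle is not the complexity bookkeeping but the numerical conditioning of the factored exponentials: the factors $e^{cx}$ and $e^{-cr_i}$ may individually overflow even though the products $e^{c(x-r_i)}$ appearing in $\Fsum{\Lap}_\alpha$ are at most $1$. I would therefore store the accumulators relative to the current query point, namely $A_j=e^{cx_j}S^{+}(x_j)=\sum_{r_i\ge x_j}e^{c(x_j-r_i)}$ and $B_j=e^{-cx_j}S^{-}(x_j)=\sum_{r_i<x_j}e^{c(r_i-x_j)}$, each a sum of at most $n$ terms in $(0,1]$ and hence bounded by $n$. Passing from $x_j$ to $x_{j+1}$ only requires multiplying $A$ by $e^{c(x_{j+1}-x_j)}$, multiplying $B$ by $e^{-c(x_{j+1}-x_j)}$, and transferring the crossed terms, all at $O(1)$ amortized cost. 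The value is recovered as $\Fsum{\Lap}_\alpha(r,x_j)=N(x_j)+\tfrac12 A_j-\tfrac12 B_j$, which preserves the $O(n+m)$ bound while keeping every intermediate quantity well scaled.
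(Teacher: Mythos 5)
Your proof is correct and follows essentially the same route as the paper: both decompose $\Fsum{\Lap}_\alpha$ into a running count plus exponentially weighted prefix/suffix sums that are updated incrementally while merging the two sorted sequences, yielding $O(n+m)$. Your rescaled accumulators $A_j,B_j$ are exactly the paper's coefficients $a_j,b_j$ anchored at the query point instead of at the knots $r_j$, so even the numerical-conditioning remark matches the structure of the paper's recurrences.
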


To prove this theorem, we will need the following proposition.

\begin{proposition}
Assume that $(r_i)_{i=0..n-1}$ is an increasing sequence
and $\alpha>0$, where we additionally put $r_{-1}=-\infty,r_n=\infty$.\footnote{The case for $\alpha<0$ can be treated analogously.}

Then 
\begin{equation} \label{eq:0}
\begin{array}{l}
\Fsum{\Lap}(x)=\\[1ex] 
=\tfrac{1}{2} a_{j} \exp\left(\tfrac{x-r_{j+1}}{\alpha}\right)-\tfrac{1}{2} b_{j+1} \exp\left(\tfrac{r_{j}-x}{\alpha}\right) +c_{j+1} \\[1ex]
\hfill \text{ for }x \in [r_j,r_{j+1}],j=-1..(n-1).
\end{array}
\end{equation}
where sequences $(a_i)_{i=-1..n-1},(b_i)_{i=0..n} \subset \R$ and $(c_i)_{i=0..n} \subset \N$ are defined iteratively by the formulae
\begin{itemize}[itemsep=5pt]
    \item \makebox[4.7em][l]{$a_{N-1}=0$,} $a_{k-1}=\exp(\frac{r_{k-1}-r_{j}}{\alpha}) \cdot (1+a_j)$,
    \item \makebox[4.7em][l]{$b_0=0$,} $b_{j+1}=\exp(\frac{r_j-r_{j+1}}{\alpha})\cdot (1+b_j)$, 
    \item \makebox[4.7em][l]{$c_0=0$,} $c_{j+1}=1+c_j$.
\end{itemize}
\end{proposition}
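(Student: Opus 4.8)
The plan is to fix one interval and let the monotonicity of $(r_i)$ collapse the piecewise definition of the Laplace CDF. Fix $j \in \{-1,\ldots,n-1\}$ and $x \in [r_j, r_{j+1}]$. Since the sequence is increasing, every index $i \le j$ has $x - r_i \ge 0$ and every index $i \ge j+1$ has $x - r_i \le 0$, so on each block of indices the selected branch of $\Lap$ is constant. Substituting the two cases gives $\Lap_\alpha(x - r_i) = 1 - \tfrac12 \exp(\tfrac{r_i - x}{\alpha})$ for $i \le j$ and $\Lap_\alpha(x - r_i) = \tfrac12 \exp(\tfrac{x - r_i}{\alpha})$ for $i \ge j+1$. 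Summing over all $i$ therefore decomposes $\Fsum{\Lap}_\alpha(r,x)$ into a count $\sum_{i=0}^{j} 1 = j+1$, a left contribution $-\tfrac12 \sum_{i=0}^{j} \exp(\tfrac{r_i - x}{\alpha})$, and a right contribution $\tfrac12 \sum_{i=j+1}^{n-1} \exp(\tfrac{x - r_i}{\alpha})$.

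The count matches $c_{j+1}$, since unrolling $c_0 = 0$, $c_{j+1} = 1 + c_j$ gives $c_{j+1} = j+1$. For the two tails I would normalize each against a single endpoint of the interval: writing the left tail (inside the $-\tfrac12$) as $\exp(\tfrac{r_{j+1} - x}{\alpha}) \sum_{i=0}^{j} \exp(\tfrac{r_i - r_{j+1}}{\alpha})$ and the right tail as $\exp(\tfrac{x - r_j}{\alpha}) \sum_{i=j+1}^{n-1} \exp(\tfrac{r_j - r_i}{\alpha})$, I identify the inner sums with $b_{j+1}$ and $a_j$ respectively. The conventions $r_{-1} = -\infty$ and $r_n = \infty$ make the left tail empty at $j = -1$ and the right tail empty at $j = n-1$, matching the base values $b_0 = 0$ and $a_{n-1} = 0$.

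It then remains to confirm that these partial sums obey the one-step recurrences defining $a$ and $b$. I would do this by peeling the extremal term off each inner sum and extracting the ratio $\exp(\tfrac{r_j - r_{j+1}}{\alpha})$ of consecutive exponentials, which rewrites the sum at one index through its neighbour and yields exactly $b_{j+1} = \exp(\tfrac{r_j - r_{j+1}}{\alpha})(1 + b_j)$ and $a_{j-1} = \exp(\tfrac{r_{j-1} - r_j}{\alpha})(1 + a_j)$; a short induction then certifies the closed forms at every index. Reassembling the three pieces reproduces the claimed expression for $\Fsum{\Lap}_\alpha(r,x)$ on $[r_j, r_{j+1}]$, and since $j$ was arbitrary this establishes it on all of $\R$.

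I expect the only genuine difficulty to be bookkeeping rather than anything conceptual. The endpoint against which each tail is normalized must be matched to the form of its recurrence: normalizing the left tail at $r_{j+1}$ and the right tail at $r_j$ produces precisely the multiplicative recurrences above, whereas normalizing at the opposite endpoints — which keeps every exponential factor bounded by $1$ on $[r_j, r_{j+1}]$, the form preferable for numerically stable evaluation — rescales the inner sums by $\exp(\tfrac{r_j - r_{j+1}}{\alpha})$ and turns the recurrences into the additive form $1 + \exp(\cdots)\,(\cdot)$. Either choice yields the same value of $\Fsum{\Lap}_\alpha(r,x)$; one only has to keep the factoring and the recurrence mutually consistent, and to treat separately the two degenerate end intervals where a block of indices is empty.
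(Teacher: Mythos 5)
Your argument is correct and rests on the same core observation as the paper's proof: on each interval $[r_j,r_{j+1}]$ the monotonicity of $r$ determines which branch of the Laplace CDF every summand uses, so $\Fsum{\Lap}_\alpha(r,\cdot)$ is there a combination of $e^{x/\alpha}$, $e^{-x/\alpha}$ and a constant. The difference is one of execution. The paper only argues \emph{existence} of coefficients of this form and then sketches the recurrences interval by interval; you instead derive explicit closed forms ($c_{j+1}=j+1$, and the two tails as normalized exponential sums) and verify the recurrences by peeling off one term, which is more work but also more informative. In particular, your bookkeeping exposes a real inconsistency in the statement that the paper's terse proof glosses over: the recurrence $b_{j+1}=e^{(r_j-r_{j+1})/\alpha}(1+b_j)$ forces $b_{j+1}=\sum_{i=0}^{j}e^{(r_i-r_{j+1})/\alpha}$, whose correct prefactor on $[r_j,r_{j+1}]$ is $e^{(r_{j+1}-x)/\alpha}$ rather than the $e^{(r_j-x)/\alpha}$ printed in \eqref{eq:0}, and symmetrically for $a_j$; the displayed formula and the stated recurrences are normalized at opposite endpoints and each tail is off by a factor $e^{(r_j-r_{j+1})/\alpha}$ as written. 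You diagnose and resolve this correctly. One small caveat: with your chosen normalization the extreme interval $j=-1$ factors the (non-empty) right tail as $e^{(x-r_{-1})/\alpha}\cdot\sum_i e^{(r_{-1}-r_i)/\alpha}$ with $r_{-1}=-\infty$, an $\infty\cdot 0$ product, so that interval (and symmetrically $j=n-1$ for the left tail) must be normalized at the finite endpoint; your closing remark about degenerate end intervals covers the empty block but should be extended to cover this point as well.
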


\begin{proof}
Given that $F$ on the intervals $(-\infty,0]$ and $[0,\infty)$ is composed of constants, $\exp(x/\alpha)$, and $\exp(-x/\alpha)$, this property extends to function $\Fsum{\Lap}_\alpha(x;r)$ for every interval $[r_j,r_{j+1}]$. Consequently, there exist sequences $a_j, b_j, c_j$ that satisfy formula~\eqref{eq:0}. We aim to demonstrate that the proposition's recurrent formulas are valid. 

Consider the interval $(-\infty, r_0)$. Clearly, $c_0$ is zero because the expression for $\Fsum{\Lap}$ in this range is composed of exponentials. Assuming that the expression for $c_j$ holds, it follows that in the next interval, we increase by $1$. This logic is uniformly applied to all subsequent $c_j$. Now, consider the sequence $a_j$. Initially, note that $a_{n-1} = 0$. Using a similar reasoning, we derive the entire recursive formula.
\end{proof}

We are now ready to present the proof of Theorem \ref{th:fast}.
\begin{proof}[Proof of Theorem \ref{th:fast}]
Since the sequence $(r_i)_{i=0..n-1}$ is ordered, then, according to the above proposition, we can compute the sequences $a_i,b_i,c_i$ in complexity $O(n)$. Now in complexity $O(n+m)$  we can find $n_l$ for $l=0,\dots{},j-1$ such that $x_l\in[r_{n_l-1},r_{n_l}]$. Finally, thanks to the proposition, we obtain $\Fsum{\Lap}_\alpha(x_l)$ is given by
$$
\tfrac{1}{2} a_{j} \exp(\tfrac{x_l-r_{j+1}}{\alpha})-\tfrac{1}{2} b_{j+1} \exp(\tfrac{r_{j}-x_l}{\alpha}) +c_{j+1},
$$
which directly implies that we can compute all these values in $O(n+m)$ complexity.
\end{proof}

\paragraph{Parallelization}
Observe that the iterative formulas for $a_i,b_i$ and $c_i$ can be computed in parallel by applying the standard prefix scan
approach.

\paragraph{Calculation of inverse function}
We now show that function $(0,n) \ni w \to \Fsum{\Lap}^{-1}_\alpha(r,k)$ can be computed with a simple straightforward formula. For clarity, we present the formulae for $\alpha=1$.

First compute $w_i=\Fsum{\Lap}{(r_i)}$ for all $i=0..n-1$. 
If $k\in(-\infty,r_0]$, then as 
on this interval $\Fsum{\Lap}(x)=\tfrac{a_0}{2}\exp(x-r_0)$,
solution to $\Fsum{\Lap}(x)=k$ is given by
$$
x = r_0+\log2+\log{}w-\log{}a_0 
$$
If $k\in[r_{i-1},r_i]$ for some $0<i<n$ then, since
\begin{align*}
\Fsum{\Lap}(x)=\tfrac{1}{2}a_{j}\exp(x-&r_{j+1})\\
-&\tfrac{1}{2}b_{j+1} \exp(r_{j}-x) +c_{j+1},
\end{align*}
by direct calculations, the solution for $\Fsum{\Lap}(x)=k$ is
$$
\begin{array}{l}
x = r_{j+1}-\log a_j +\\
+\log (k - c_{j+1}+ \sqrt{\left|k - c_{j+1}\right|^2 + a_j b_{j+1} \cdot e^{r_j - r_{j+1}}}).
\end{array}
$$
If $k\in[r_n,\infty)$, we analogously compute
$$
x = r_{n-1}-\log2-\log(c_n-k)+\log b_n.
$$





\subsection{Efficient computation of derivatives}
The fascinating aspect of the functions being analyzed is our ability to calculate all required derivatives in $O(n\log{}n)$ time. The key to this efficiency is that instead of computing derivatives in matrix form, we construct formulas for right and left multiplication of the derivative by row and column vectors, respectively. A more detailed explanation is provided in the Appendix.

Let us start with the case of top-k selection, namely for function $\Ftop{\Lap}_\alpha(r,k)$. Define the function $P:\R^n \ni r=(r_i) \to p=(p_i) \in \Delta_k \subset [0,1]^n$, where
$$
p_i=\Lap_\alpha(b-r_i) \text{ where }b=\Fsum{\Lap}^{-1}_\alpha(r,k).
$$
It can be shown (see~\cref{app:der_p_resp_r}), that the derivative of $p$ with respect to $r$ is given by the matrix
\begin{equation} \label{eq:D}
D=\frac{\partial P}{\partial w}=s\,q^T -\diag(s),
\end{equation}
where $s=(s_i) \in \R^n$ is given by 
$s_i = \frac{1}{\alpha}\min(p_i, 1 - p_i)$
and
$$
q = \softmax(-\left|\tfrac{b - r_0}{\alpha}\right|, \ldots, -\left|\tfrac{b - r_{N-1}}{\alpha}\right|).
$$
For optimizing purposes, utilizing such a constructed derivative would result in a complexity of at least $O(n^2)$. However, note that to apply gradient optimization it is actually enough to compute $v^T D$ and $D\,v$ for an arbitrary vector $v$. By executing direct calculations, one can trivially derive from \eqref{eq:D} the formulae
$$
D\,v = \il{q,v}\,s-s \odot v, \text{ and }
v^T D = \il{s,v}\,q^T-q^T \odot v^T,
$$
where $\odot$ denotes component-wise multiplication. 

Another important function, necessary to calculate soft-sorting and permutations is, for fixed $k{}\leq{}n$, the function
$$
\begin{array}{l}
L(r,k_0,\ldots,k_{j-1})=\\ 
{\;\;\;}=(\Fsum{\Lap}^{-1}_\alpha(r,k_0),\ldots,\Fsum{\Lap}^{-1}_\alpha(r,k_{j-1})).
\end{array}
$$
In this instance, calculating the derivative of $L$ efficiently, in particular the left and right multiplication of the derivative by row and column vectors, respectively, can also be implemented within a time complexity of $O(n\log{}n)$.

\section{Experiments and results}
\label{sec:exp_results}
The main incentive of this paper was to design a new and simple to implement closed-form method that would constitute a tool for use in all the order tasks. We wanted \our{} to outperform other methods in terms of time and memory complexity, while obtaining the general SOTA in the standard experiments. 

\subsection{Experiments on soft-top-k methods}
\label{sec:exp-soft-topk}
Our evaluation consists of training from scratch on CIFAR-100 and fine-tuning on ImageNet-1K and ImageNet-21K-P datasets. These evaluations aim to assess \our{}'s influence of top-1 and top-5 training on accuracy metrics. The experiment highlights the effect of the smaller number of classes in CIFAR-100 compared to the larger count of 1000 in ImageNet-1K and an even greater count of 10450 in ImageNet-21K-P. We employ $P_j$ settings akin to those in~\citet{petersen2022differentiable}. Experiment details are available in~\cref{sec:appendix_Experiments}.

\begin{table}[t]\small
    \centering
    \caption{Performance comparison of permutation-based methods on CIFAR-100 using ResNet18. The table shows ACC\!@\!1 (standard accuracy) and ACC\!@\!5 (top-5 accuracy). Bold and italic values denote the best and second-best results, respectively. $P_j$ represents the training probability distribution. Methods marked with ({\small*}) are based on \citet{berrada2018smoothlossfunctionsdeep}.}
    \label{tab:cifar100-main}
    \begin{tabular}{@{\,}l@{\;\;\;\;\;\;\;\;\;\;}c@{\;\;\;\;\;\;\;\;}r@{\;\;\;\;\;\;\;}r@{\,}}
        \toprule
        \multirow{2}{*}{Method} & \multirow{2}{*}{$P_j$}  &  \multicolumn{2}{c}{CIFAR-100} \\
        \cmidrule(lr){3-4}
        & & ACC\!@\!1 & ACC\!@\!5 \\
        \midrule
        Smooth\,top-$k$\textsuperscript{\scriptsize*} & [0.,0.,0.,0.,1.]     & 53.07 & 85.23 \\
        NeuralSort          & [0.,0.,0.,0.,1.]     & 22.58 & 84.41 \\
        SoftSort            & [0.,0.,0.,0.,1.]     & 01.01 & 05.09 \\
        SinkhornSort        & [0.,0.,0.,0.,1.]     & \textit{55.62} & \textit{87.04} \\
        DiffSortNets        & [0.,0.,0.,0.,1.]     & 52.81 & 84.21 \\
        Lap-Top-k (ours)              & [0.,0.,0.,0.,1.]     & \textbf{58.07} & $\textbf{87.50}$ \\
        \midrule
        NeuralSort          & [.2,.2,.2,.2,.2]   & 61.46 & 86.03 \\
        SoftSort            & [.2,.2,.2,.2,.2]   & 61.53 & 82.39 \\
        SinkhornSort        & [.2,.2,.2,.2,.2]   & 61.89 & \textit{86.94} \\
        DiffSortNets        & [.2,.2,.2,.2,.2]   & \textit{62.00} & 86.73 \\
        Lap-Top-k (ours)    & [.2,.2,.2,.2,.2]   & \textbf{64.53} & $\pmb{88.51}$ \\
        \bottomrule
    \end{tabular}
\end{table}

Refer to~\cref{tab:cifar100-main} for the CIFAR-100 results, where we used $P_j=[0.,0.,0.,0.,1.]$ for top-5 learning. Furthermore, we used $[.2,.2,.2,.2,.2]$ to have a direct comparison with the results given in~\citet{petersen2022differentiable}. Our experiments demonstrated superior or at least comparable results. The results are consistently stable and are fast to obtain. 

\begin{table}[tb]\small
    \centering
    \caption{Results on ImageNet-1K and ImageNet-21K-P for fine-tuning the head of ResNeXt-101~\citep{mahajan2018exploring}. We report ACC\!@\!1 and ACC\!@\!5 accuracy metrics, averaged over 10 seeds for ImageNet-1K and 2 seeds for ImageNet-21K-P. Methods are evaluated using different $P_j$ configurations, highlighting the effectiveness in optimizing ranking-based losses. Best performances are in bold, with second best in italics. Methods labeled with ({\small*}) are based on \citet{berrada2018smoothlossfunctionsdeep}.}
    \label{tab:imagenet-main}
    \begin{tabular}{@{}l@{\;\;}c@{\;}r@{}r@{\quad}r@{\!\!}c@{\!}}
    \toprule
    \multirow{2}{*}{Method} & \multirow{2}{*}{$P_j$} & \multicolumn{2}{c}{ImgNet-1K} & \multicolumn{2}{c}{ImgNet-21K-P}\\
    \cmidrule(lr){3-4}\cmidrule(lr){5-6}
     & & ACC\!@\!1 & ACC\!@\!5 & ACC\!@\!1 & ACC\!@\!5 \\
    \midrule
    Smooth\,top-$k$\textsuperscript{\scriptsize*} & [0.,0,0,0,1.]     & 85.15 & 97.54 & 34.03 & 65.56 \\
    NeuralSort          & [0.,0,0,0,1.]     & 33.37 & 94.75 & 15.87 & 33.81 \\
    SoftSort            & [0.,0,0,0,1.]     & 18.23 & 94.97 & 33.61 & \textit{69.82} \\
    SinkhornSort        & [0.,0,0,0,1.]     & \textbf{85.65} & \textbf{98.00} & \textit{36.93} & 69.80 \\
    DiffSortNets        & [0.,0,0,0,1.]     & 69.05 & 97.39 & 35.96 & 69.76 \\
    Lap-Top-k (our)    & [0.,0,0,0,1.]     &  \textit{85.47} & \textit{97.83} & \textbf{37.93} & \textbf{70.67} \\
    \midrule
    NeuralSort          & [.5,0,0,0,.5]   & \textbf{86.30} & 97.90        &  37.85 & 68.08 \\
    SoftSort            & [.5,0,0,0,.5]   & 86.26 & \textit{97.96}        &  39.93 & 70.63 \\
    SinkhornSort        & [.5,0,0,0,.5]   & \textit{86.29} & \textbf{97.97}  &  39.85 & 70.56 \\
    DiffSortNets        & [.5,0,0,0,.5]   & 86.24 & 97.94        &  \textit{40.22} & \textit{70.88} \\
    Lap-Top-k (our)    & [.5,0,0,0,.5]   &  86.28 & 97.93 &  \textbf{40.48} & \textbf{71.05} \\
    \bottomrule
    \end{tabular}
\end{table}

\cref{tab:imagenet-main} summarizes the outcomes of fine-tuning on ImageNet-1K and ImageNet-21K-P, similarly to the setup in~\citet{petersen2022differentiable}. The ImageNet-21K-P results show ~\our{} surpass alternative methods. \our{} yields slightly superior results faster due to \our{}'s efficient time and memory management in handling this intricate task. The results of ImageNet-1K are on par with those of other methods. This demonstrates that \our{} is particularly well suited for high-dimensional tasks, being fast and easy to use for other tasks. At the same time, \our{} has a differentiable parameter $\alpha$ (see~\cref{sec:f-sum}), which can be used to fine-tune individual solutions. The results of these experiments are presented in~\cref{sec:appendix_Experiments}.

\citet{petersen2022differentiable} algorithm suggests a predetermined number of $m$ best results to compute for differential ranking. This increases its effectiveness.

\paragraph{Runtime, and memory analysis}
\cref{fig:time_memory_jn_2} depicts the experimental time--memory correlations for \our{} alongside other algorithms. We exclude~\cite{blondel2020fast} as it does not return probability vectors~\cite{petersen2022differentiable}; we also exclude~\cite{prillo2020softsort} as we found their method not working for some random input data. The relationships of time/memory against data dimension $n$ and $k$, with $k=n/2$, is considered. It is evident that, concerning memory usage, \our{} (the blue solid line) outperforms, or at least matches, the best competing approaches, in particular for higher data dimensions. Some algorithms could not complete tasks due to memory limitations. Similarly, this holds for time complexity as well. These patterns are consistent for both the standalone forward pass and the combined forward and backward passes, using either CPU or GPU (see also~\cref{fig:time_memory_jn_2_CPU,fig:time_memory_jn_2_GPU,fig:time_memory_jn_5_CPU,fig:time_memory_jn_5_GPU} in the Appendix). To assess the statistical significance of these experimental comparisons, we have performed a critical difference CD ranking test~\cite{demsar2006statistical}, see~\cref{fig:time_memory_jn_2}, which shows that \our{} outperforms other competitive algorithms. 

\begin{figure}[hbt]
    \centering
    \includegraphics[width=0.99\linewidth]{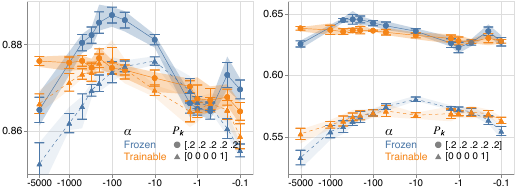}
    \caption{\our{} top-1 and top-5 accuracies across varying values of $\alpha$ and top-5 training on \emph{CIFAR-100}. Higher $\alpha$ produces more discrete selections, lower $\alpha$ lead to smoother outputs, influencing how strictly the model enforces a top-k criterion.} 
    \label{fig:alpha_sensitivity}
\end{figure}

\cref{fig:alpha_sensitivity} shows the impact of \our{} $\alpha$ trainable parameter value on training on the resulting in  harder or softer solutions~(see~\cref{sec:appendix_Experiments},~\cref{fig:alpha_sensitivity_app}). We have also performed experiments on the accuracy of direct approximations of $k$ value using \our{}, see~\cref{fig:error_cpu}. Our proposition consistently gives better approximations than competing approaches, irrespective of a given problem dimension.

\paragraph{$k$-NN for Image Classification}
The protocol followed was adapted from~\cite{xie2020differentiable}.\footnote{We use the code kindly made available by authors.} ~\cref{tab:knn_result} shows test accuracies on MNIST and CIFAR-10. We see that Lap-top-k does not deteriorate the model performance. We provide additional details in~\cref{sec:appendix_Experiments}.

\begin{table}[htb]\footnotesize
  \centering
  \caption{Accuracy of $k$NN-based methods on MNIST and CIFAR-10 comparing standard $k$NN with enhanced variants. Bold and italic values stand for best and second-best results. Our proposed $k$NN+Lap-Top-$k$ shows strong performance, especially on CIFAR-10, showing the benefits of Laplacian-based top-k optimization.}
  \label{tab:knn_result}
  \begin{tabular}{@{\,}l@{\;\;\;\;\;\;\;\;\;\;\;\;\;\;\;\;\;}r@{\;\;\;\;\;\;\;\;\;\;\;\;\;\;}r@{\,}}
        \toprule
        Algorithm                & MNIST    & CIFAR-10\\
        \midrule
        $k$NN                    & 97.2   & 35.4   \\
        $k$NN+PCA                & 97.6   & 40.9   \\
        $k$NN+AE                 & 97.6   & 44.2   \\
        $k$NN+pretrained CNN     & 98.4   & 91.1   \\
        RelaxSubSample           & 99.3   & 90.1   \\
        $k$NN+NeuralSort         & \textbf{99.5} & 90.7   \\
        $k$NN+OT                 & 99.0   & 84.8   \\
        $k$NN+Softmax $k$ times  & 99.3   & \textit{92.2} \\
        CE+CNN                   & 99.0   & 91.3   \\
        $k$NN+SOFT Top-$k$       & \textit{99.4} & \textbf{92.6} \\
        $k$NN+Lap-Top-$k$ (ours) & \textit{99.4} & \textit{92.2} \\ 
        \bottomrule
  \end{tabular}
\end{table}

\begin{figure}[t]
    \centering
    \qquad\;\includegraphics[width=.9\linewidth]{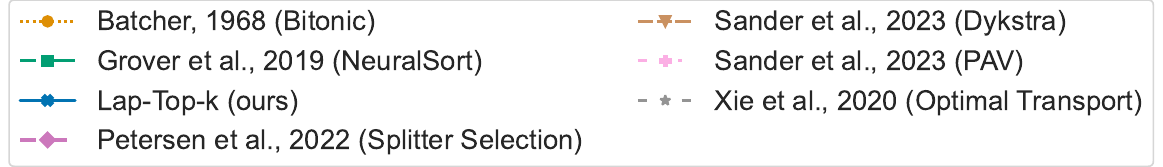} \\
    \includegraphics[width=\linewidth]{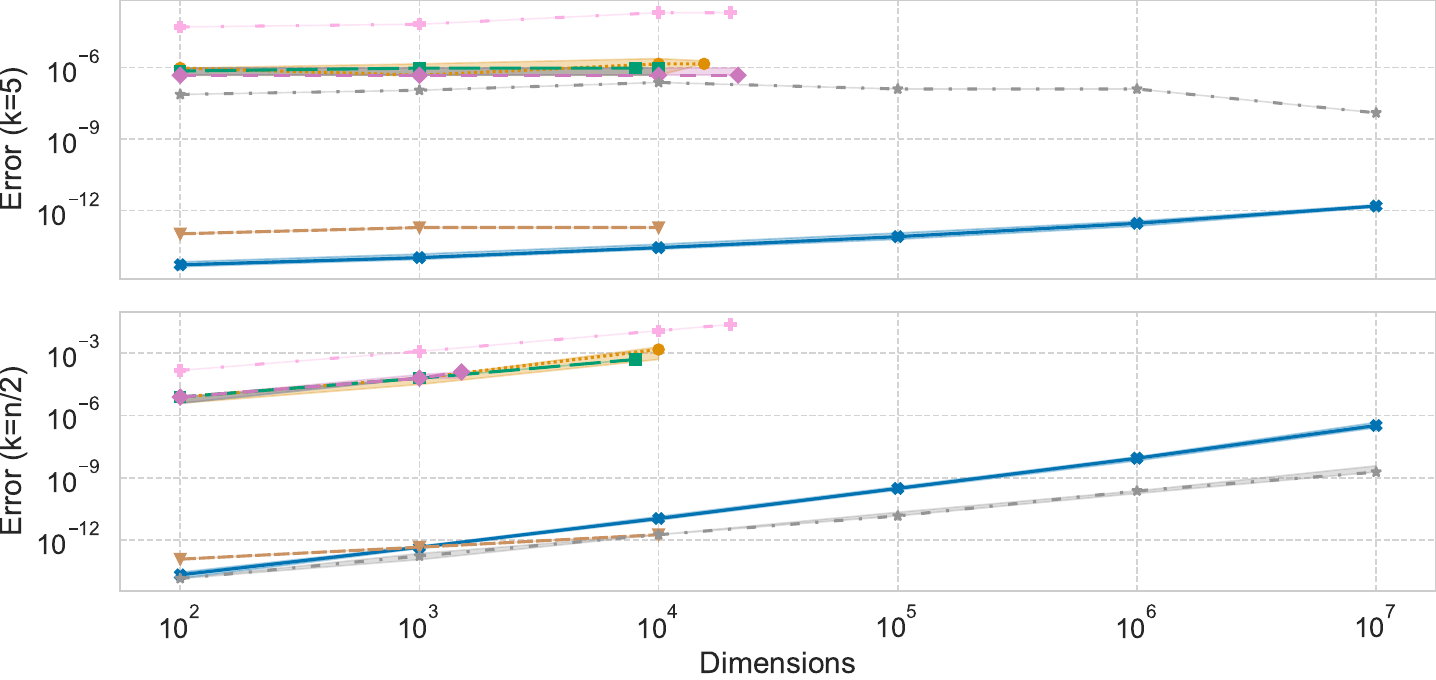} 
    \caption{The error calculated as \(\text{Error(k)} = |\sum_{i=1}^{n}p_i - k|\), where \(p_i\), as a function of \(k\) (the value defining Top-\(k\) in the respective methods) and \(n\) (the data dimension). The plots correspond to the errors obtained in the computations presented in~\cref{fig:time_memory_jn_2}.
    }
    \label{fig:error_cpu}
\end{figure}

\subsection{Experiments on soft-permutation methods}
\label{sec:exp-soft-perm}
To assess the effectiveness of soft-permutation techniques techniques, we use an enhanced version of the MNIST dataset, the large-MNIST~\citep{grover2019stochastic} where  four random MNIST digits are combined into one composite image representing four-digits number. This allows to evaluate the techniques' ability to learn correct permutations. 

Baseline methods employ a common to all CNN to encode images into a feature space. The row-stochastic baseline combine CNN features into a vector processed by a multilayer perceptron for multiclass predictions. Sinkhorn and Gumbel-Sinkhorn approaches use the Sinkhorn operator to create doubly-stochastic matrices from these features. NeuralSort-based methods apply the NeuralSort operator with Gumbel noise to produce unimodal row-stochastic matrices. The loss in all methods is the row-wise cross-entropy against ground-truth permutation matrices with row-stochastic outputs.
In our method \our{} we apply the same backbone architecture while eliminating the need for additional layers\footnote{In \our{}, logits are fed into a modified cross-entropy function to align the CNN's output vector with the ground-truth permutation vector, reducing complexity while maintaining competitive performance.} to transform the stochastic matrix.  \cref{tab:permute_acc} provides results of this approach against other methods.

\begin{table}[t]
\footnotesize
\centering
\caption{Mean permutation accuracy~(\%) on testset for Large-MNIST for different number of randomly sampled images~$n$. First value is the fraction of permutations correctly identified, while the one in parentheses is the fraction of individual element ranks correctly predicted. Results for competing methods are from \citet{grover2019stochastic}. Bold and italic values stand for best and second-best results. GumbelS stands for Gumbel-Sinkhorn method, DNeural for deterministic NeuralSort, SNeural for Stochastic NeuralSort.
}
\label{tab:permute_acc}
{\fontsize{8.5pt}{10pt}\selectfont 
\begin{tabular}{@{}l@{}c@{\;\,}c@{\;\,}c@{\;\,}c@{\;\,}c@{}}
\toprule
Method & n = 3 & n = 5 & n = 7 & n = 9 & n = 15 \\
\midrule
Vanilla   & 46.7 (80.1) & \phantom{0}9.3 (60.3) & \phantom{0}0.9 (49.2) & \phantom{0}0.0 (11.3) & \phantom{0}0.0 \phantom{0}(6.7) \\
Sinkhorn      & 46.2 (56.1) & \phantom{0}3.8 (29.3) & \phantom{0}0.1 (19.7) & \phantom{0}0.0 (14.3) & \phantom{0}0.0 \phantom{0}(7.8) \\
GumbelS    & 48.4 (57.5) & \phantom{0}3.3 (29.5) & \phantom{0}0.1 (18.9) & \phantom{0}0.0 (14.6) & \phantom{0}0.0 \phantom{0}(7.8) \\
DNeural    & \textit{93.0} (\textit{95.1}) & \textit{83.7} (\textit{92.7}) & 73.8 (\textbf{90.9}) & \textbf{64.9} (\textbf{89.6}) & \textit{38.6} (\textit{85.7}) \\
SNeural    & 92.7 (95.0) & 83.5 (92.6) & \textbf{74.1} (\textbf{90.9}) & \textit{64.6} (\textit{89.5}) & \textbf{41.8} (\textbf{86.2}) \\
Lap(ours)\,    & \textbf{94.2} (\textbf{96.1}) & \textbf{85.3} (\textbf{93.2}) & \textbf{74.1} (\textit{90.7}) & 63.1 (88.6) & 33.9 (82.4) \\
\bottomrule
\end{tabular}
}
\end{table}


\section{Conclusions}
\label{sec:conclusions}
We have introduced \our{}, a new method that enhances the process of performing tasks such as ranking, sorting, and selecting the top-k elements from a dataset. It is straightforward theoretically, can be used for all soft-order tasks, and is differentiable with respect to all parameters. \our{} utilizes the properties of the Laplace distribution to improve both the speed and memory usage compared to other existing techniques. The effectiveness of the method has been supported by theoretical and experimental validation. Furthermore, we have attached an open source code for this approach, both in Python and in CUDA for execution on GPUs, making it accessible for broader community.


\section*{Impact Statement}
\our{} enables applications of soft ordering methods to large-scale real-world datasets. Consequently, \our{} has the potential to significantly impact industries relying on large-scale data processing, such as recommendation systems, natural language processing, and computer vision.

\section*{Acknowledgments} This research was partially funded by the National Science Centre, Poland, grants no. 2020/39/D/ST6/01332 (work by \L{}ukasz Struski and Micha\l{} B. Bednarczyk) and 2023/49/B/ST6/01137 (work by Jacek Tabor). Some experiments were performed on servers purchased with funds from the flagship project entitled ``Artificial Intelligence Computing Center Core Facility'' from the DigiWorld Priority Research Area within the Excellence Initiative -- Research University program at Jagiellonian University in Kraków.


\bibliography{ref_icml}

\begin{thebibliography}{25}
\providecommand{\natexlab}[1]{#1}
\providecommand{\url}[1]{\texttt{#1}}
\expandafter\ifx\csname urlstyle\endcsname\relax
  \providecommand{\doi}[1]{doi: #1}\else
  \providecommand{\doi}{doi: \begingroup \urlstyle{rm}\Url}\fi

\bibitem[Berrada et~al.(2018)Berrada, Zisserman, and
  Kumar]{berrada2018smoothlossfunctionsdeep}
Berrada, L., Zisserman, A., and Kumar, M.~P.
\newblock Smooth loss functions for deep top-k classification.
\newblock \emph{arXiv:1802.07595}, 2018.

\bibitem[Blondel et~al.(2020)Blondel, Teboul, Berthet, and
  Djolonga]{blondel2020fast}
Blondel, M., Teboul, O., Berthet, Q., and Djolonga, J.
\newblock Fast differentiable sorting and ranking.
\newblock \emph{International Conference on Machine Learning}, 2020.

\bibitem[Chen et~al.(2021)Chen, Beutel, Covington, Jain, Belletti, and
  Chi]{chen2021topkoffpolicy}
Chen, M., Beutel, A., Covington, P., Jain, S., Belletti, F., and Chi, E.
\newblock Top-k off-policy correction for a reinforce recommender system.
\newblock \emph{arXiv:1812.02353}, 2021.

\bibitem[Chen et~al.(2023)Chen, Li, Li, and
  Pan]{chen2023learningsparsetransformer}
Chen, X., Li, H., Li, M., and Pan, J.
\newblock Learning a sparse transformer network for effective image deraining.
\newblock \emph{IEEE/CVF Conference on Computer Vision and Pattern
  Recognition}, June 2023.

\bibitem[Cortes et~al.(2024)Cortes, Mao, Mohri, Mohri, and
  Zhong]{cortes2024cardinality}
Cortes, C., Mao, A., Mohri, C., Mohri, M., and Zhong, Y.
\newblock Cardinality-aware set prediction and top-$k$ classification.
\newblock \emph{arXiv:2407.07140}, 2024.

\bibitem[Cuturi et~al.(2019)Cuturi, Teboul, and Vert]{cuturi2019differentiable}
Cuturi, M., Teboul, O., and Vert, J.-P.
\newblock Differentiable ranking and sorting using optimal transport.
\newblock \emph{Advances in Neural Information Processing Systems}, 32, 2019.

\bibitem[Dem{\v{s}}ar(2006)]{demsar2006statistical}
Dem{\v{s}}ar, J.
\newblock Statistical comparisons of classifiers over multiple data sets.
\newblock \emph{Journal of Machine Learning Research}, 7, 2006.

\bibitem[Garcin et~al.(2022)Garcin, Servajean, Joly, and
  Salmon]{garcin2022stochastic}
Garcin, C., Servajean, M., Joly, A., and Salmon, J.
\newblock Stochastic smoothing of the top-k calibrated hinge loss for deep
  imbalanced classification.
\newblock \emph{arXiv:2022.02193}, 2022.

\bibitem[Grover et~al.(2019)Grover, Wang, Zweig, and
  Ermon]{grover2019stochastic}
Grover, A., Wang, E., Zweig, A., and Ermon, S.
\newblock Stochastic optimization of sorting networks via continuous
  relaxations.
\newblock \emph{International Conference on Learning Representations}, 2019.

\bibitem[He et~al.(2016)He, Zhang, Ren, and Sun]{he2016deep}
He, K., Zhang, X., Ren, S., and Sun, J.
\newblock Deep residual learning for image recognition.
\newblock \emph{IEEE Conference on Computer Vision and Pattern Recognition},
  2016.

\bibitem[Hoefler et~al.(2021)Hoefler, Alistarh, Ben-Nun, Dryden, and
  Peste]{hoefler2021sparsity}
Hoefler, T., Alistarh, D., Ben-Nun, T., Dryden, N., and Peste, A.
\newblock Sparsity in deep learning: pruning and growth for efficient inference
  and training in neural networks.
\newblock \emph{Journal of Machine Learning Research}, 23, 2021.

\bibitem[Kingma \& Ba(2015)Kingma and Ba]{Kingma2014AdamOpt}
Kingma, D.~P. and Ba, J.
\newblock {Adam}: A method for stochastic optimization.
\newblock \emph{International Conference on Learning Representations}, 2015.
\newblock arXiv:1412.6980.

\bibitem[Lapin et~al.(2016)Lapin, Hein, and
  Schiele]{lapin2016lossfunctionstopkerror}
Lapin, M., Hein, M., and Schiele, B.
\newblock Loss functions for top-k error: Analysis and insights.
\newblock \emph{arXiv:1512.00486}, 2016.

\bibitem[Mahajan et~al.(2018)Mahajan, Girshick, Ramanathan, He, Paluri, Li,
  Bharambe, and van~der Maaten]{mahajan2018exploring}
Mahajan, D., Girshick, R.~B., Ramanathan, V., He, K., Paluri, M., Li, Y.,
  Bharambe, A., and van~der Maaten, L.
\newblock Exploring the limits of weakly supervised pretraining.
\newblock \emph{European Conference on Computer Vision}, 11206, 2018.

\bibitem[Mao et~al.(2024)Mao, Mohri, and Zhong]{mao2024topkclassification}
Mao, A., Mohri, M., and Zhong, Y.
\newblock Top-$k$ classification and cardinality-aware prediction.
\newblock \emph{2403.19625}, 2024.

\bibitem[Masud et~al.(2023)Masud, Werenski, Murphy, and
  Aeron]{masud2023multivariatesoftrank}
Masud, S.~B., Werenski, M., Murphy, J.~M., and Aeron, S.
\newblock Multivariate soft rank via entropy-regularized optimal transport:
  Sample efficiency and generative modeling.
\newblock \emph{Journal of Machine Learning Research}, 24, 2023.

\bibitem[Petersen et~al.(2022{\natexlab{a}})Petersen, Borgelt, Kuehne, and
  Deussen]{petersen2022monotonic}
Petersen, F., Borgelt, C., Kuehne, H., and Deussen, O.
\newblock Monotonic differentiable sorting networks.
\newblock \emph{arXiv:2203.09630}, 2022{\natexlab{a}}.

\bibitem[Petersen et~al.(2022{\natexlab{b}})Petersen, Kuehne, Borgelt, and
  Deussen]{petersen2022differentiable}
Petersen, F., Kuehne, H., Borgelt, C., and Deussen, O.
\newblock Differentiable top-k classification learning.
\newblock \emph{International Conference on Machine Learning},
  2022{\natexlab{b}}.

\bibitem[Prillo \& Eisenschlos(2020)Prillo and Eisenschlos]{prillo2020softsort}
Prillo, S. and Eisenschlos, J.
\newblock Softsort: A continuous relaxation for the argsort operator.
\newblock \emph{International Conference on Machine Learning}, 2020.

\bibitem[Sander et~al.(2023)Sander, Puigcerver, Djolonga, Peyr{\'e}, and
  Blondel]{sander2023fast}
Sander, M.~E., Puigcerver, J., Djolonga, J., Peyr{\'e}, G., and Blondel, M.
\newblock Fast, differentiable and sparse top-k: a convex analysis perspective.
\newblock \emph{International Conference on Machine Learning}, 2023.

\bibitem[Shvetsova et~al.(2023)Shvetsova, Petersen, Kukleva, Schiele, and
  Kuehne]{shvetsova2023learning}
Shvetsova, N., Petersen, F., Kukleva, A., Schiele, B., and Kuehne, H.
\newblock Learning by sorting: Self-supervised learning with group ordering
  constraints.
\newblock \emph{IEEE International Conference on Computer Vision}, 2023.

\bibitem[Xie et~al.(2020)Xie, Dai, Chen, Dai, Zhao, Zha, Wei, and
  Pfister]{xie2020differentiable}
Xie, Y., Dai, H., Chen, M., Dai, B., Zhao, T., Zha, H., Wei, W., and Pfister,
  T.
\newblock Differentiable top-k operator with optimal transport.
\newblock 2020.

\bibitem[Xu et~al.(2023)Xu, Wang, Nie, and Li]{xu2023efficient}
Xu, L., Wang, R., Nie, F., and Li, X.
\newblock Efficient top-k feature selection using coordinate descent method.
\newblock \emph{AAAI Conference on Artificial Inteligence}, 37, 2023.

\bibitem[Zhao et~al.(2019)Zhao, Lin, Zhang, Ren, Su, and
  Sun]{zhao2019explicitsparsetransformer}
Zhao, G., Lin, J., Zhang, Z., Ren, X., Su, Q., and Sun, X.
\newblock Explicit sparse transformer: Concentrated attention through explicit
  selection.
\newblock \emph{arXiv:1812.02353}, 2019.

\bibitem[Zhou et~al.(2021)Zhou, Liu, and Yang]{zhouICCV2021}
Zhou, T., Liu, J., and Yang, J.
\newblock Learning with noisy labels via sparse regularization.
\newblock \emph{IEEE/CVF International Conference on Computer Vision}, 2021.

\end{thebibliography}
\bibliographystyle{icml2025}


\twocolumn[\newpage]

\appendix

\section{\our{}: pseudocode for solution of top-k problem}
\label{app:pseudocode}

We present the complete pseudocode for the solution of top-w problem constructed in the framework of \our{}. For sequence $s=(s_i)_{i=0..n-1}$ we construct a sequence $p=(p_i) \in (0,1)^n$, such that 
$$
\sum_i p_i=k
$$
and the operation $s \to p$ is differentiable. In the limiting case when the pairwise differences between elements of the sequence $s_i$ are large, the solution tends to hard solution to top-k problem.

We put
$$
Lap(x)=\begin{cases}
\tfrac{1}{2} \exp(x) &\text{ for }x \leq 0, \\
1-\tfrac{1}{2} \exp(-x) & \text{ otherwise.}
\end{cases}
$$

\begin{algorithm}
    \caption{LapSum: soft top-$k$ function \\for the input $s=(s_i)_{i=0..n-1}$} 
    \begin{algorithmic}[1]
        \Require Sequence $(s_i)_{i=0..n-1}$, parameter $ \in (0,n)$
        \State Sort $s$ in decreasing into $r = (r_i)_{i=0..n-1}$
        \State Set $r_{-1} = \infty$, $r_n = -\infty$
        \State Initialize: $a_{n-1} = 0$, $b_0 = 0$, $c_0 = 0$
        \For{$j = n-1$ to $0$}
            \State $a_{j-1} =(1 + a_j) \cdot \exp(r_j - r_{j-1}) $
        \EndFor
        \For{$j = 0$ to $n-1$}
            \State $b_{j+1} = (1 + b_j) \cdot \exp(r_{j+1} - r_j) $
            \State $c_{j+1} = 1 + c_j$
        \EndFor
        \State Set $w_{-1} = 0$, $w_n = n$
        \For{$j = 0$ to $n-1$}
            \State $w_j = \frac{1}{2} a_j \exp(r_{j+1} - r_j) - \frac{1}{2} b_{j+1} + c_{j+1}$
        \EndFor
        \State Find $j\in\{0, \dots, n\}$ such that $k\in[w_{j-1},w_{j}]$
        \If{$j = 0$}
            \State \hspace{-0.5em}$b = r_0-\log2-\log{}k+\log{}a_0$
        \ElsIf{$0 < j < n$}
            \State\hspace{-1.2em} 
                $\begin{array}{l}
                    b = r_{j+1} + \log a_j  \\
                    -\log\left(k \!-\! c_{j+1} \!+\! \sqrt{\left|k \!-\! c_{j+1}\right|^2 \!+\! a_j b_{j+1} e^{r_{j+1} - r_j}}\right)
                \end{array}$
        \ElsIf{$k = n$}
            \State $b = r_{n-1} + \log 2 + \log (c_n - k) - \log b_n$
        \EndIf
        \For{$i = 0$ to $n-1$}
            \State $p_i = Lap(s_i-b)$
        \EndFor
        \State \Return $p = (p_i)$
    \end{algorithmic}
\end{algorithm}

\section{Limiting case}
\begin{theorem} \label{app:th:1}
For every $k \in (0,n)$ and $r \in \R^n$ we have
\begin{equation} \label{eq:t_app}
\Ftop{F}_\alpha(r,k) \in \Delta_k.
\end{equation}
Moreover, if $k$ is integer and the elements of $r$ are pairwise distinct, then
\begin{itemize}[itemsep=5pt]
    \item $\Ftop{F}_\alpha(r,k) \to \topmin_k(r) $ as $\alpha \to 0^+$,
    \item $\Ftop{F}_\alpha(r,k) \to \topmax_k(r)$ as $\alpha \to 0_-$.
\end{itemize}
\end{theorem}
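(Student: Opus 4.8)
The first assertion $\Ftop{F}_\alpha(r,k)\in\Delta_k$ needs no new work: each coordinate $F_\alpha(b-r_i)$ lies in $(0,1)$ because $F$ is the CDF of a strictly positive density, and the coordinates sum to $k$ by the defining equation $\Fsum{F}_\alpha(r,b)=k$. All the content is in the limiting statements, so the plan is to treat $\alpha\to0^+$ in detail and obtain $\alpha\to0^-$ by the symmetric argument, in which $F_\alpha(x)\to H(-x)$ swaps the roles of ``below $b$'' and ``above $b$''. Throughout write $r_{(1)}<\dots<r_{(n)}$ for the (distinct) order statistics and $b_\alpha=\Fsum{F}^{-1}_\alpha(r,k)$.

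\textbf{Step 1 (localize $b_\alpha$).} I would first pin $b_\alpha$ into the gap around the $k$-th value by evaluating $\Fsum{F}_\alpha(r,\cdot)$ at $r_{(k)}$ and $r_{(k+1)}$. Using $F_\alpha(y)\to1$ for fixed $y>0$, $F_\alpha(y)\to0$ for fixed $y<0$, and $F_\alpha(0)=\tfrac12$, the $k-1$ strictly smaller terms tend to $1$, the single term at $r_{(k)}$ contributes $\tfrac12$, and the larger terms vanish, giving $\Fsum{F}_\alpha(r,r_{(k)})\to k-\tfrac12$ and, analogously, $\Fsum{F}_\alpha(r,r_{(k+1)})\to k+\tfrac12$. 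Since $\Fsum{F}_\alpha(r,\cdot)$ is strictly increasing for $\alpha>0$ and equals $k$ at $b_\alpha$, for all sufficiently small $\alpha>0$ this forces $r_{(k)}<b_\alpha<r_{(k+1)}$ (here integrality of $k$ is what produces the clean half-integer limits straddling $k$).

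\textbf{Step 2 (non-critical coordinates).} For $j\le k-1$ one has $b_\alpha-r_{(j)}\ge r_{(k)}-r_{(k-1)}>0$, a fixed positive constant, so $F_\alpha(b_\alpha-r_{(j)})\to1$; for $j\ge k+2$ one has $b_\alpha-r_{(j)}\le r_{(k+1)}-r_{(k+2)}<0$, so $F_\alpha(b_\alpha-r_{(j)})\to0$. These coordinates already realize the desired $0/1$ pattern, and substituting into $\sum_j F_\alpha(b_\alpha-r_{(j)})=k$ isolates the two critical terms:
\[
F_\alpha(b_\alpha-r_{(k)})+F_\alpha(b_\alpha-r_{(k+1)})\to 1 .
\]

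\textbf{Step 3 (the two critical coordinates — the crux).} Set $s=(b_\alpha-r_{(k)})/\alpha>0$ and $t=(b_\alpha-r_{(k+1)})/\alpha<0$, so the two surviving terms are $F(s),F(t)$ and
\[
s-t=\frac{r_{(k+1)}-r_{(k)}}{\alpha}\to+\infty .
\]
I would argue by contradiction that $F(t)\to0$: if not, then along a subsequence $t$ is bounded below, and since $t<0$ this bounds $t$, whence $s=t+(r_{(k+1)}-r_{(k)})/\alpha\to+\infty$, so $F(s)\to1$; the relation from Step~2 then forces $F(t)\to0$, contradicting the assumption. Hence $F(t)\to0$, and the same relation gives $F(s)\to1$. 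Combining the three steps, the coordinates with the $k$ smallest $r_i$ tend to $1$ and all others tend to $0$, which in the original indexing is exactly $\topmin_k(r)$; the $\alpha\to0^-$ computation is identical after replacing $H(\cdot)$ by $H(-\cdot)$ and yields $\topmax_k(r)$. The \emph{main obstacle} is precisely Step~3: localization only gives $b_\alpha\in(r_{(k)},r_{(k+1)})$, and a priori $b_\alpha$ could approach an endpoint at rate $O(\alpha)$, so coordinatewise convergence is not automatic; it is the scalar identity $F(s)+F(t)\to1$ together with the divergence $s-t\to+\infty$ that rigidifies the split, while everything else reduces to limits of $F_\alpha$ at fixed nonzero arguments.
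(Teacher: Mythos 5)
Your proof is correct, and it is organized differently from the paper's, though the two share the same engine. The paper skips your Step 1 entirely: after sorting $r$ increasingly it fixes any index $j\ge k$ and proves $F_\alpha(b_\alpha-r_j)\to 0$ by a single uniform contradiction --- if $F_{\alpha_m}(b_{\alpha_m}-r_j)\ge\varepsilon$ along a subsequence then $(b_{\alpha_m}-r_j)/\alpha_m\ge F^{-1}(\varepsilon)$, so for every $i<k$ one gets $F_{\alpha_m}(b_{\alpha_m}-r_i)\ge F\bigl(F^{-1}(\varepsilon)+(r_j-r_i)/\alpha_m\bigr)\to 1$, and the budget $\sum_i F_\alpha(b_\alpha-r_i)=k$ is exceeded by $\varepsilon$; the coordinates $i<k$ then tend to $1$ because the surviving $k$ terms must sum to $k$ while each is at most $1$. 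The paper therefore never needs to locate $b_\alpha$. You instead first pin $b_\alpha\in(r_{(k)},r_{(k+1)})$ via the half-integer limits $k\mp\tfrac12$, dispose of $n-2$ coordinates by fixed-offset limits, and reserve the contradiction for the two coordinates adjacent to $b_\alpha$, running it against the two-term sub-budget $F(s)+F(t)\to 1$ rather than the full sum. The underlying mechanism --- a non-vanishing ``upper'' term forces a ``lower'' term to $1$ and breaks the normalization --- is identical; what your route buys is that it isolates explicitly the only real difficulty (that $b_\alpha$ could a priori approach an endpoint of the gap at rate $O(\alpha)$), while the paper's route is shorter and leaves that point implicit. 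Both arguments are complete, and both handle $\alpha\to 0^-$ by the same symmetry.
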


\begin{proof}
Observe that \eqref{eq:t_app} follows directly from the fact that $b=\Fsum{F}^{-1}_\alpha(r,k)$, and thus $\sum_i F_\alpha(b-r_i)=k$.

We proceed to the proof of the limiting case, where we consider the case $\alpha \to 0^+$ (the case $\alpha \to 0^-$ can be shown analogously). Since the function $\Fsum{F}_\alpha$ is equivariant, without loss of generality we can assume that the sequence $(r_i)$ is strictly increasing. Consider the values
$b_\alpha=\Fsum{F}^{-1}_\alpha(r,k)$.
We show that 
\begin{equation}
F_\alpha(b_\alpha-r_j) \to 0 \text{ as } \alpha \to 0^+.
\end{equation}
If this were not the case then, trivially, there would exist a sequence $\alpha_m\to0^+$ and $\varepsilon>0$ such that 
$$
F_{\alpha_m}(b_{\alpha_m}-r_j)=
F\left(\frac{b_{\alpha_m}-r_j}{\alpha}\right) \geq\varepsilon\text{ for all }m.
$$
By continuity and fact that $F$ is strictly increasing, this implies that $\frac{b_{\alpha_m}-r_j}{\alpha}\geq c=F^{-1}(\varepsilon)$. Consequently for $i<k$, since $r_j-r_i>0$, we would get
\begin{align*}
 F_{\alpha_m}(b_{\alpha_m}-r_i) & =F\left(\tfrac{b_{\alpha_m}-r_i}{\alpha_m}\right)
=F\left(\tfrac{b_{\alpha_m}-r_j}{\alpha_m}+\tfrac{r_j-r_i}{\alpha_m}\right) \\
& \geq F\left(c+\tfrac{r_j-r_i}{\alpha_m}\right) \to 1 \text{ as }m \to\infty 
\end{align*}
which would lead to the inequality
$$
\lim_{m \to \infty} \sum_{i=0}^{k-1} F_{\alpha_m}(b_{\alpha_m}-r_i)+F_{\alpha_l}(b_{\alpha_m}-r_j) \geq k+\varepsilon >k.
$$
This we have obtained a contradiction with the equality $\sum_{i=0}^{n-1} F_\alpha (b-r_i)=k$. 
This proves the assertion, since we trivially obtain that $F_\alpha(b-r_j) \to 0$ for $\alpha \to 0^+$ and $j \geq k$. Consequently, making use of equality
$\sum_i F_\alpha (b-r_i)=k$ we obtain that $F_\alpha(b-r_j) \to 1$ for $\alpha \to 0^+$ and $j < k$.
\end{proof}

\section{Calculation of derivatives}
In this section we are going to obtain main formulas for derivatives. We show, in particular, that all derivatives can be easily directly computed. Moreover, what is important for the gradient optimization process the multiplication of all the derivatives on a vector can be computed in $O(n \log n)$ time. 



We shall start with establishing some consistent notation.
We are given $r = (r_i)_{i=0..n-1} \in \R^n$, $k \in (0, n)$ and $\alpha \neq 0$. We consider the function
$$
b=b(r,k,\alpha)= \Fsum{\Lap}^{-1}(r,k;\alpha)
$$
which by definition satisfies the condition
\begin{equation} \label{eq:uwi}
\sum_i \Lap_{\alpha}(b - r_i) 
= k,
\end{equation}
where 
$$
\Lap_\alpha(x)=
\begin{cases}
\frac{1}{2}\exp(x/\alpha) & \text{if } x/\alpha \leq 0, \\
1 - \frac{1}{2}\exp(-x/\alpha) & \text{if } x/\alpha > 0.
\end{cases}
$$
We additionally consider the function
$$
\lap_\alpha(x)=
\tfrac{1}{2\alpha}\exp(-|x/\alpha|),
$$
which, for $\alpha>0$, corresponds to the density of the Laplace distribution.

We put
\begin{align*}
p_i & = Lap_{\alpha}(x - r_i) \in [0, 1], \\[1ex]
s_i & = \lap_{\alpha}(b(r,k) - r_i) = \tfrac{1}{2\alpha}\exp(-|\tfrac{x - r_i}{\alpha}|) \\
& = \tfrac{1}{\alpha}\min(p_i, 1 - p_i).
\end{align*}
We will need the following notation:
\begin{align*}
S &= S_\alpha(w) = \sum_i s_i, \\[-1ex]
q &= \tfrac{1}{S} (s_0, \ldots, s_{n-1})= \\
&= \softmax\left(-|\tfrac{x - r_0}{\alpha}|, \ldots, -|\tfrac{x - r_{n-1}}{\alpha}|\right).
\end{align*}
After establishing the necessary notation we proceed to computation of the derivatives.

\subsection{Derivative of inverse of $\Fsum{\Lap}$}\label{app:C1}
\paragraph{Derivative of $\Fsum{\Lap}^{-1}$ with respect to $k$}
Differentiating \eqref{eq:uwi} with respect to \(k\), we get
$$
\sum_j \lap_\alpha(b(r,w;\alpha) - r_j) \cdot \frac{\partial b}{\partial w} = 1,
$$
which leads to
\begin{equation} \label{eq:bpow}
\frac{\partial b}{\partial k} = \frac{1}{\sum_j \lap_{\alpha}(b(r,k) - r_j)} = \frac{1}{S}.
\end{equation}

\paragraph{Derivative of $\Fsum{\Lap}^{-1}$ with respect to $\alpha$}  
Differentiating \eqref{eq:uwi} with respect to \( \alpha \), we get:
$$
\sum_i \lap_\alpha(b - r_i) \frac{\frac{\partial b}{\partial \alpha} \alpha - (b - r_i)}{\alpha} = 0.
$$

As a result,
\begin{align*}
 \frac{\partial b}{\partial \alpha} & = \frac{1}{\alpha} \frac{\sum_i \lap_\alpha\left(\frac{b - r_i}{\alpha}\right)(b - r_i)}{\sum_i \lap_\alpha\left(\frac{b - r_i}{\alpha}\right)} \\
& = \frac{1}{\alpha}\left(b- \il{q, r}\right). 
\end{align*}

\paragraph{Partial derivatives of $\Fsum{\Lap}^{-1}$ with respect to $r$}  
Differentiating \eqref{eq:uwi} with respect to $r_i$, we obtain:
$$
\sum_j \lap_{\alpha}(b(r,k)) \cdot \left(\frac{\partial b}{\partial r_i} - 1\right) = 0,
$$
and consequently:
$$
\frac{\partial b}{\partial r_i} \sum_j \lap_{\alpha}(b(r,k) - r_j) = \lap_{\alpha}(b(r,k) - r_i),
$$
which simplifies to:
$$
\frac{\partial b}{\partial r_i} = \frac{\lap_{\alpha}(b(r,k) - r_i)}{\sum_j \lap_{\alpha}(b(r,k ) - r_j)} = q_i,
$$
or equivalently:
$$
\frac{d b}{d r} = q^T.
$$

\subsection{Derivatives of $\Ftop{\Lap}$}
We put 
$$
p_i(r,w;\alpha)=\Lap_\alpha(b-r_i)
\text{ where }b=\Fsum{Lap}^{-1}(r,k;\alpha).
$$
Observe that $(p_i)$ is exactly the solution given by \our{} to soft top-k problems.

\paragraph{Derivatives of $p_i(r_1, \ldots, r_n, k)$ with respect to $k$}  
We have:
$$
\frac{\partial p_i}{\partial k} = \frac{\partial}{\partial k} \Lap_{\alpha}(b(r,k) - r_i)
= \lap_{\alpha}(b(r,k) - r_i) \cdot \frac{\partial b}{\partial k}.
$$

Substituting into \eqref{eq:bpow}, we get:
$$
\frac{\partial p_i}{\partial w} =
\frac{\lap_{\alpha}(b(r,k) - r_i)}{\sum_j \lap_{\alpha}(x_j - b(x,k))}.
$$

Thus:
$$
\frac{dp}{dk} = q.
$$

\paragraph{Derivative of $p(\cdot,k)$ with respect to $\alpha$}  
We have:
\begin{align*}
 \frac{\partial p_i}{\partial \alpha} & = \frac{\partial}{\partial \alpha} \Lap_{\alpha}(b(r,k) - r_i)
=\frac{\partial}{\partial \alpha}\Lap\left(\frac{b-r_i}{\alpha}\right) \\
& =\lap_{\alpha}(b(r,k) - r_i) \cdot \left[\frac{\partial b}{\partial \alpha}-\frac{b-r_i}{\alpha}\right].
\end{align*}
Substituting:
\begin{align*}
 \frac{\partial p_i}{\partial \alpha} & = \lap_{\alpha}(b(r,k) - r_i) \cdot 
\left(\frac{b}{\alpha} - \frac{1}{\alpha} \il{q, r} - \frac{b}{\alpha}+\frac{r_i}{\alpha}\right) \\
& = \frac{s_i}{\alpha} \cdot (r_i-\il{q,r}).
\end{align*}
Thus:
\begin{equation}
\frac{\partial p}{\partial \alpha} =
\frac{1}{\alpha}(s \odot r-\il{q,r}s).
\end{equation}

\paragraph{Derivative $D$ of $p(\cdot,k)$ with respect to $r$}
\label{app:der_p_resp_r}
For $D=[d_{ij}]$, we have:
$$
d_{ij} = \frac{\partial p_j}{\partial r_i} = \lap_{\alpha}(b(r,k) - r_i)
\cdot \left(\frac{\partial b}{\partial r_i} - \delta_{ij}\right).
$$
Thus (in column vector notation):
\begin{align*}
 D & =
[\lap_{\alpha}(b - r_i)]
\left[\frac{\partial b}{\partial r_i}\right]^T - \mathrm{diag}(\lap_{\alpha}(b - r_i)) \\
& =s\,q^T-\mathrm{diag}(s).
\end{align*}

\paragraph{Derivative of \( p(\cdot,k) \) in the direction \( v \)}  
Since we want to compute without declaration of all matrix, we get
the derivative in the direction of $v$:
$$
D \circ v =[s\,q^T-\mathrm{diag}(s)]\,v=\il{q,v}\,s-s \odot v,
$$
where $\odot$ denotes componentwise multiplication.

\paragraph{Computing $v^T \cdot D$}  
We have
$$
v^T \circ D  =v^T \circ [s\,q^T-\mathrm{diag}(s)]=\il{s,v}\,q^T-s^T \odot v^T.
$$

\subsection{Computing derivatives for the function $u = \log p$}

In some cases, especially for CE loss, we will rather need $\log p$ instead of $p$.

\paragraph{Value of $\log p$}
We have:
$$
\log p_i = g\left(\frac{b - r_i}{\alpha}\right),
$$
where:
$$
g(x) =
\begin{cases}
-\log 2 + \log(2 - \exp(-|x|)) & \text{for } x \geq 0, \\
-\log 2 + x & \text{for } x < 0.
\end{cases}
$$

To calculate derivatives, we need the auxiliary function:
$$
h(x) =
\begin{cases}
1 & \text{for } x \leq \frac{1}{2}, \\
\frac{1}{x} - 1 & \text{for } x > \frac{1}{2}.
\end{cases}
$$

\paragraph{Derivative with respect to $k$}
We have
   $$
   \frac{\partial u_i}{\partial k} = \frac{1}{p_i}\cdot \frac{\partial p_i}{\partial k}=\frac{q_i}{p_i}.
   $$
  Thus
  $$
    \frac{\partial u_i}{\partial k} =q/p= h(p)/(\alpha\,{S}). 
  $$

\paragraph{Derivative of $u$ with respect to $\alpha$}  
We have:
$$
\frac{\partial u}{\partial \alpha} =
\frac{1}{p} \odot \frac{\partial p}{\partial \alpha}.
$$

Substituting:
$$
\frac{\partial u}{\partial \alpha} =
\frac{1}{p} \odot \frac{1}{\alpha}(s \odot r-\il{q,r}\,s).
$$

Thus:
$$
\frac{\partial u}{\partial \alpha} =
\frac{1}{\alpha^2}(h(p) \odot r - \il{q, r}\,h(p)).
$$

\paragraph{Derivative of $u$ with respect to $r$}
We have
\begin{align*}
Du=\diag\left(\frac{1}{p}\right)Dp=&\diag\left(\frac{1}{p}\right)(s\,q^T-\diag(s))\\
=&
\frac{1}{\alpha}\left(h(p) q^T-\diag(h(p))\right)
\end{align*}

\paragraph{$Du v$ -- Derivative with respect to $r$ in the direction $v$}
   $$
   \frac{d u}{d r}(v) =\frac{1}{\alpha} \left(\il{q, v} \cdot h(p) - v \odot h(p)\right),
   $$
   where \( \odot \) denotes componentwise multiplication.

\paragraph{Dual case: computation of $v^T Du$}
We have
$$
v^T \cdot Du=\frac{1}{\alpha}(\il{v,h(p)} q^T-
v^T \odot h(p)^T).
$$

\subsection{Derivatives of $(k_i)_{i=0..L-1} \to (\Fsum{Lap}^{-1}_\alpha(r,k_i))_{i=0..L-1}$}

We shall now consider the case when the input is $k=(k_i) \in \R^L$. This happens in particular in \our{} for soft sorting or the computation of soft permutation matrix.

We put 
$$
b_i=b_i(r,k_i;\alpha)=\Fsum{Lap}^{-1}_\alpha(r,k_i) 
$$
and 
$$
B=B(r,k;\alpha)=(b_i)_{i=0..K-1} \in \R^L.
$$
Our aim is to compute the derivatives of the function $B$ with respect to $w$, $r$ and $\alpha$. We apply here the formulas from~\cref{app:C1}.

We first introduce matrix $Q$ which is defined as
$$
Q=\begin{bmatrix}
    q_0^T \\
    \vdots \\
    q_{L-1}^T
\end{bmatrix},
$$
where $q_m \in \R^n$ is the (column) vector defined by
$$
q_m=\softmax(-\left|\tfrac{b_m - r_0}{\alpha}\right|, \ldots, -\left|\tfrac{b_m - r_{N-1}}{\alpha}\right|) \in \R^n.
$$

\paragraph{Derivative over $\alpha$}
We have
$$
\frac{\partial b}{\partial \alpha}=\frac{1}{\alpha}(b-Q \cdot r).
$$

\paragraph{Derivative over $w$}
We have
$$
\frac{\partial B}{\partial k}=\diag((1/S_m)_{m=0..L-1}),
$$
where 
$$
S_m=\frac{1}{2\alpha}\sum_i \exp(-\left|\tfrac{b_m-r_i}{\alpha}\right|)
$$

\paragraph{Derivative over $r$}
We have
$$
\frac{db}{dr}=Q.
$$

\paragraph{Fast computation}
Our aim is to show how one can efficiently compute the left and right multiplication of the above derivatives by vectors. We present here the general idea for fast computation of all $(S_l)$, the other calculations can be done by applying a similar approach. For clarity of presentation, we restrict to the case $\alpha=1$.

So assume that we want to efficiently compute the values
$$
D_m=\sum_{i=0}^{n-1}\exp(-\left|b_l-r_i\right|)
$$
for all $m$, that is, in a smaller complexity than  $O(L\cdot n)$. As before, we assume that the sequences $r$ and $b$ are in an increasing order.
Observe that 
$$
D_m=A_m+B_m,
$$
where 
$$
A_m=\sum_{i:r_i\leq b_m}\exp(r_i-b_m), \, B_m=\sum_{i:r_i> b_m}\exp(b_m-r_i).
$$
Let $n_m=\min\{i \in \{0,\ldots,n-1\}: r_i > b_m\}$ (and $n$ if there is no $i$ satisfying this condition). Then since the sequence $r$ is increasing we obtain that 
$$
A_m=\sum_{i=0}^{n_m-1}\exp(r_i-b_), \, B_m=\sum_{i=n_m}^{n-1}\exp(b_m-r_i).
$$
Clearly, since sequences $r$ and $b$ are increasing we obtain that
$$
n_m\leq n_{m+1}.
$$
Consequently
$$
A_{m+1}=\exp(b_m-b_{m+1}) \cdot A_m+\sum_{i=n_m}^{n_{m+1}}\exp(r_i-b_{m+1})
$$
and
$$
B_{m}=\exp(b_m-b_{m+1}) \cdot B_{m+1}+\sum_{i=n_m}^{n_{m+1}}\exp(b_m-r_i).
$$
Therefore, we can first compute $A_0$ and $B_{n-1}$, and compute the rest by the above iterative formulas. Finally, the complexity of the computations is $O(L+n)$ instead of $O(L\cdot n)$.

\begin{figure}[hbt]
    \centering
    \includegraphics[width=\linewidth]{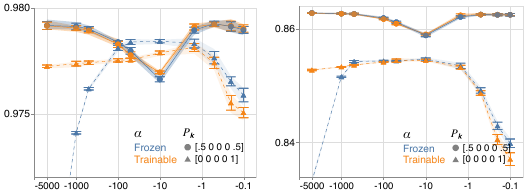}
     \caption{\emph{ImageNet-1K} Top-1 and Top-5 accuracies of \our{} across varying values of $\alpha$ and $P_j$.
     Larger $\alpha$'s produce "harder", more discrete selections while smaller $\alpha$ lead to smoother outputs. This trade-off influences how strictly the model enforces a top-$k$ criterion}
    \label{fig:alpha_sensitivity_app}
\end{figure}


\section{Experiments}
\label{sec:appendix_Experiments}
 \paragraph{Training from scratch on CIFAR100 and fine-tuning on ImageNet}
 We adopt the identical experimental setup as {\citep{petersen2022differentiable}} using the Adam optimizer~\citep{Kingma2014AdamOpt} for all models. A grid search was conducted to fine-tune both the $\alpha$ parameter of \our{} and the learning rates. We select $\alpha$ from \{-1.5, -1, -0.5, -0.2, -0.1\} and Leaning Rate (LR) from \{$10^{-4.75}$, $10^{-4.5}$, $10^{-4.25}$, $10^{-4}$, $10^{-3.75}$, $10^{-3.5}$, $10^{-3.25}$, $10^{-3.0}$\}, using early stopping approach too. See ~\cref{fig:alpha_sensitivity_app} for influence of $\alpha$ parameter on model performance. \noindent{\bf{Baselines}} As baselines, we reuse scores reported in~\citep{petersen2022differentiable}. \noindent{\bf{Implementation}} We use code from \url{https://github.com/Felix-Petersen/difftopk} kindly made available by \citet{petersen2022differentiable}. Hyperparameter details are available in ~\cref{tab:appendix_cifar_imagnet_hyperparam}.
\begin{table}[htb]\small
\centering
\caption{The hyperparameters for training CIFAR-100 from scratch and fine-tune of ImageNet-1K and ImageNet21K-P models. }
\begin{tabular}{@{\;}l@{\;\;\;\;\;\;\;}l@{\;\;\;\;}l@{\;\;\;\;\;}l@{\;}}
\hline
            & CIFAR-100  & ImageNet-1K & ImageNet-21K-P\\ \hline
Batch size   & 100  & 500 & 500 \\
\#Epochs   & 200 & 100 & 40 \\
best LR  & $10^{-3.25}$ & $10^{-4.25}$ & $10^{-4.75}$ \\
best $\alpha$  & -1.5 & -1.5 & -1.0 \\
Model  & ResNet18 & ResNeXt-101 & ResNeXt-101 \\ \hline
\end{tabular}
\label{tab:appendix_cifar_imagnet_hyperparam}
\end{table}

\paragraph{$k$-NN for Image Classification}
We adopt an identical experimental setup as {\citep{xie2020differentiable}}, which follows the approach of {\citet{grover2019stochastic}}. A grid search was conducted to fine-tune both the $\alpha$ parameter of \our{} and the learning rates. We set $\alpha$ as network trainable parameter. We select $\alpha$ at $t_0$ from \{-10, -5, -3, -1.5, -1, -0.5, -0.2, -0.1\} and Leaning Rate (LR) from \{$10^{-4.75}$, $10^{-4.5}$, $10^{-4.25}$, $10^{-4}$, $10^{-3.75}$, $10^{-3.5}$, $10^{-3.25}$, $10^{-3.0}$, $10^{-2.75}$\}, also using early stopping.  For details see~\cref{tab:knn_setting}. 
\noindent{\bf{Baselines}} The baseline results are taken directly from~\citet{xie2020differentiable}, 
which in turn references \citet{grover2019stochastic} and \citet{he2016deep}.
\noindent{\bf{Implementation}} We use the code kindly made available by \citet{xie2020differentiable}, 
which itself builds on \citet{cuturi2019differentiable} and \citet{grover2019stochastic}.

\begin{table}[t]\small
\centering
\caption{\label{tab:knn_setting} $k$-NN experiments' hyperparameters. }
\begin{tabular}{@{\;}l@{\;\;\;\;}r@{\;\;\;\;\;\;}r@{\;}}
\hline
Dataset      & MNIST  & CIFAR-10\\ \hline
$k$   & 9 & 9 \\
Batch size of query samples    & 100 & 100 \\
Batch size of template samples    & 100 & 100 \\
Optimizer   & SGD & SGD \\
Number of epochs    & 200 & 200 \\
Best LR   & $10^{-2.75}$ & $10^{-3.25}$ \\
Best $\alpha$    & -1 &  -10\\
Momentum    & 0.9 & 0.9 \\
Weight decay  & $5\times 10^{-4}$ & $5\times 10^{-4}$\\
Model & 2-layer CNN & ResNet18 \\
\hline
\end{tabular}
\end{table}

\paragraph{Trainable Parameter $\pmb{\alpha}$ in Soft-Permutation Experiment} 
In our method, as described in the experiment in~\cref{sec:exp-soft-perm}, we introduce a trainable parameter \(\alpha\) that can be optimized on the basis of gradient during the training process. This parameter plays a crucial role in controlling the dynamics of the soft-permutation mechanism, allowing the model to adaptively balance between exploration and exploitation during learning. \our{} stays fully differentiable for all $\alpha\neq0$.

\cref{fig:alpha_trainable} illustrates the evolution of \(\alpha\) alongside the cost function over the course of training iterations. The blue line represents the cost function, which decreases smoothly and consistently, indicating stable convergence of the model. The orange line tracks the value of \(\alpha\), which starts at an initial value of 0.4. Initially, \(\alpha\) decreases to facilitate finer and more precise updates to the model parameters. As training progresses, \(\alpha\) increases to a value greater than 1, allowing faster learning and greater updates at later stages of training. This adaptive behavior of \(\alpha\) demonstrates its effectiveness in guiding the optimization process, ensuring both stability and efficiency in learning. 

\begin{figure}[t]
    \centering
    \includegraphics[width=\linewidth]{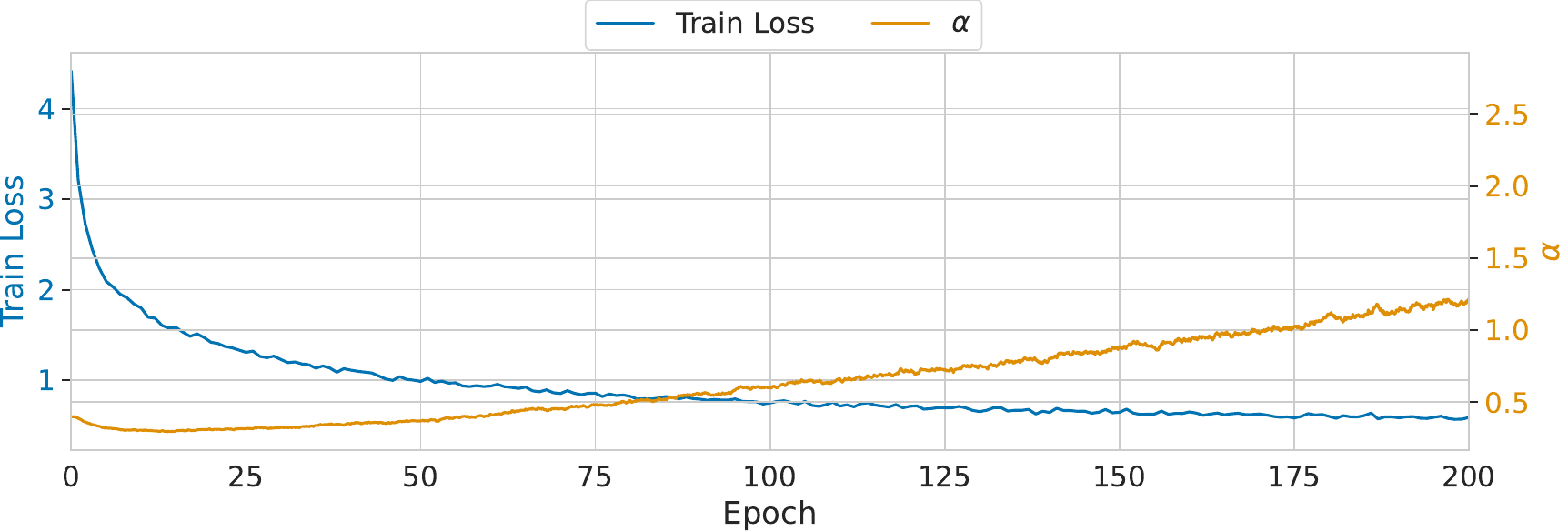}
    \caption{Results from one of the experiments in~\cref{sec:exp-soft-perm}, showing the cost function (blue line) and the learning rate alpha (orange line) over training iterations. The cost function decreases as the model learns, indicating convergence. Parameter $\alpha$ starts at~0.4, initially decreases to allow for finer updates, and then increases to a value above 1 to accelerate learning in later stages.}
    \label{fig:alpha_trainable}
\end{figure}

\textbf{Error, Runtime and Memory Analysis}
We provide a detailed runtime and memory analysis of our method compared to other approaches. We use float64 precision whenever possible and float32 otherwise. Methods that support float64 precision are: LapSum (ours), Optimal Transport~\citep{xie2020differentiable}, SoftSort~\citep{prillo2020softsort}, Fast, Differentiable and Sparse Top-k(Dykstra)~\citep{sander2023fast}.~\cref{fig:time_memory_jn_5_CPU,fig:time_memory_jn_2_CPU} present the time complexity and memory usage of all the soft top-k methods considered when executed on a CPU. Similarly,~\cref{fig:time_memory_jn_5_GPU,fig:time_memory_jn_2_GPU} show the same metrics, but computed on a GPU. These figures illustrate the relationship between runtime, memory consumption, and data dimensionality $n$, with $k = n/2$. Across all scenarios, our approach (represented by the solid blue line) consistently outperforms or matches the best competing methods, particularly for higher data dimensions. Some competing algorithms fail to complete tasks due to excessive memory requirements, further highlighting the efficiency of our method. In addition, we include critical difference (CD) diagrams to statistically validate performance comparisons~\cite{demsar2006statistical}. The CD diagrams confirm that our method ranks within the top-performing group of algorithms, demonstrating its superiority in both time and memory efficiency across CPU and GPU implementations. These results reinforce the robustness and scalability of our approach in practical applications.

The performance of soft permutation methods is visualized in~\cref{fig:time_memory_sort} where the data dimension \(n\) on the horizontal axis is compared to the memory usage and the computation time on the vertical axis. Computations were performed on a CPU due to the high memory demands of some of the considered methods. Results are shown for the forward process (left column) and the forward-backward process (right column). Our implementation is in PyTorch, while the other methods are implemented in TensorFlow. Our approach demonstrates superior scalability and efficiency, particularly for large \(n\), achieving faster computation times and lower memory usage compared to the other methods.

\begin{figure*}[t]
    \centering
    \includegraphics[width=.85\textwidth]{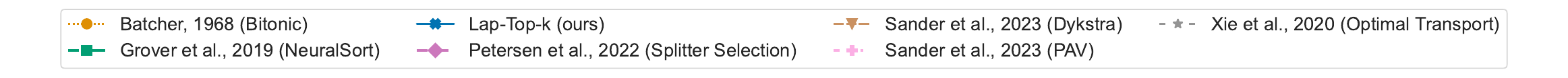}\\
    \includegraphics[width=.95\textwidth]{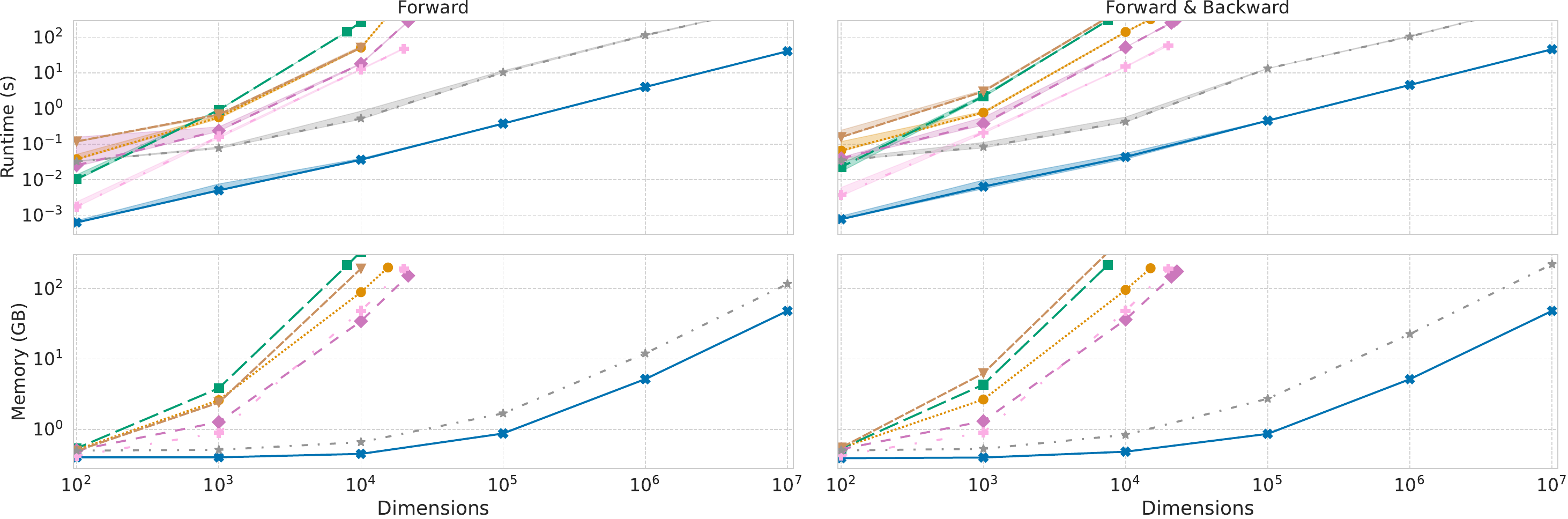}\\
    \includegraphics[width=.85\textwidth]{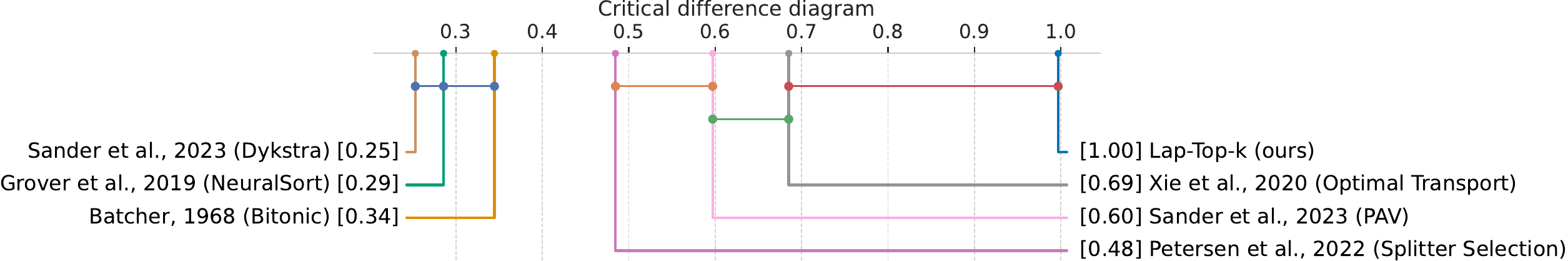}
    \caption{Relationship between the dimension (\(n\)), shown on the horizontal axis, and the maximum memory usage and computation time on vertical axis, for several functions with a fixed \(k = 5\). All calculations displayed in this graph were performed on a CPU. The evaluation examined memory consumption and execution time during the forward pass (left column) and the combined forward and backward passes (right column). }
    \label{fig:time_memory_jn_5_CPU}
\end{figure*}

\begin{figure*}[t]
    \centering
    \includegraphics[width=.85\textwidth]{images/CPU-legend_time_memory.pdf}
    \includegraphics[width=.95\textwidth]{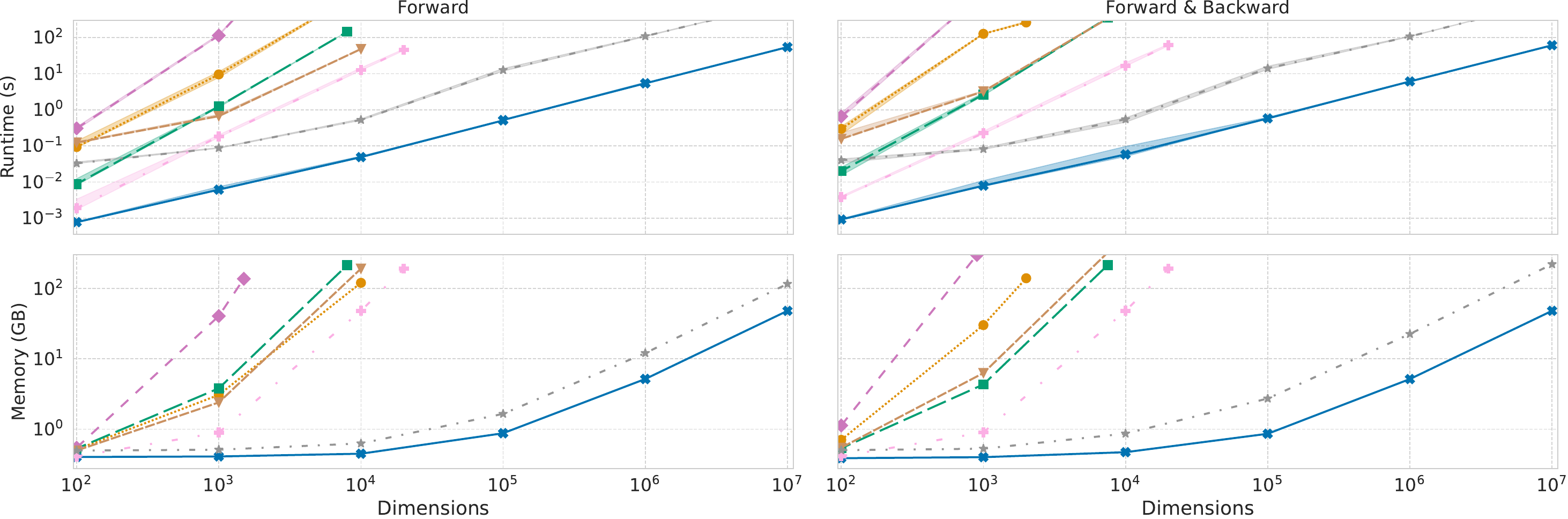}
    \includegraphics[width=.85\linewidth]{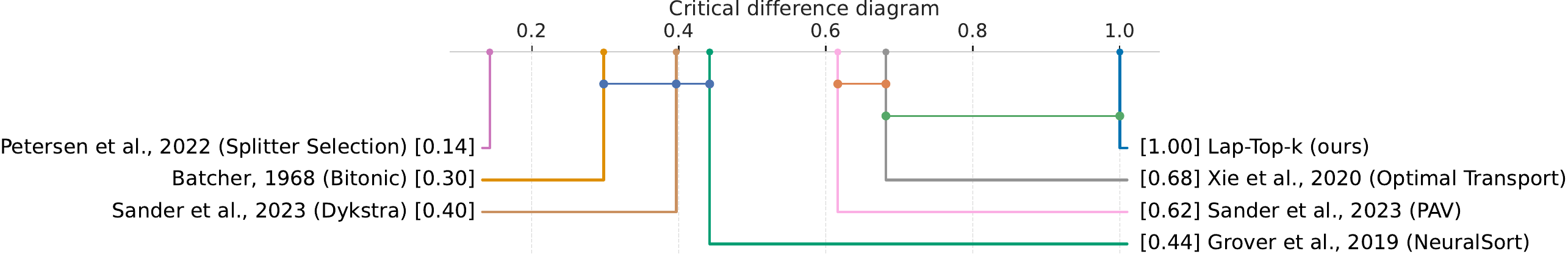}
    \caption{Relationship between the data dimension (\(n\)), represented on the horizontal axis, and the maximum memory usage and computation time, represented on the vertical axis. The parameter \(k\) was dependent on \(n\), calculated using the formula \(k = n / 2\). The computations depicted in this figure were performed on a CPU. The relationships between time and memory usage were analyzed during the forward process (charts in the left column) and the forward and backward processes (charts in the right column).}
    \label{fig:time_memory_jn_2_CPU}
\end{figure*}

\begin{figure*}[t]
    \centering
    \includegraphics[width=.85\textwidth]{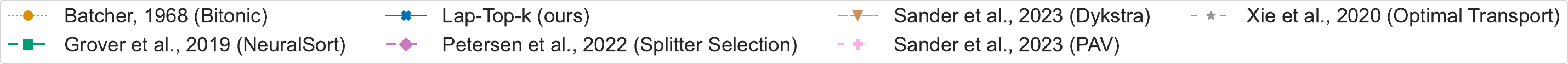}\\
    \includegraphics[width=.95\textwidth]{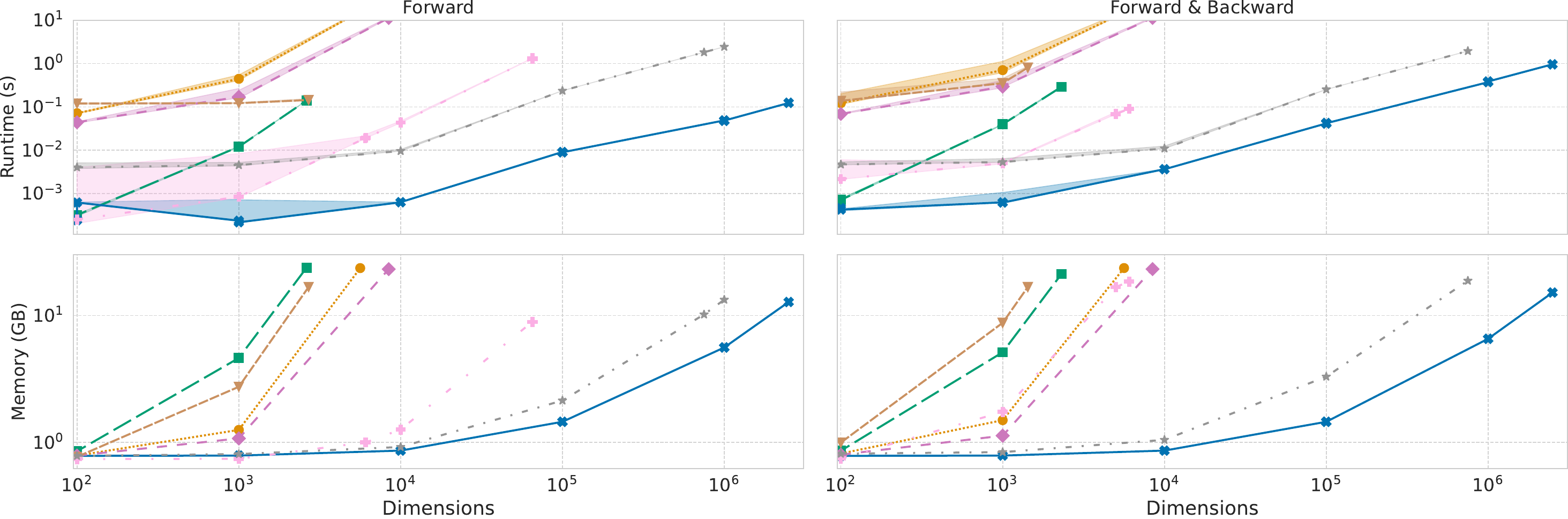}\\
    \includegraphics[width=.85\textwidth]{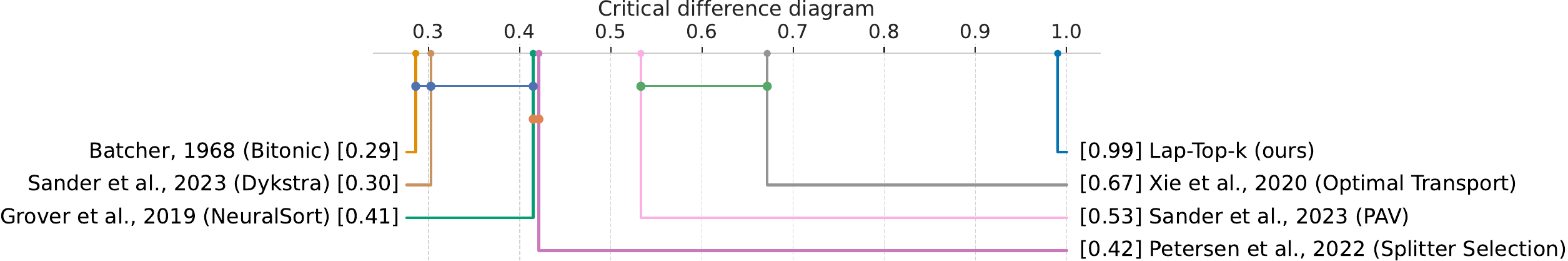}\\
    \caption{Relationship between the dimension \(n\) (horizontal axis) and the maximum memory usage and computation time, represented on the vertical axis, for several functions with a fixed \(k=5\). All calculations displayed in this graph were performed on a GPU. The evaluation examined memory consumption and execution time during the forward pass (left column) and the combined forward and backward passes (right column).}
    \label{fig:time_memory_jn_5_GPU}
\end{figure*}

\begin{figure*}[t]
    \centering
    \includegraphics[width=.85\textwidth]{images/GPU-legend_time_memory.pdf}\\
    \includegraphics[width=.95\textwidth]{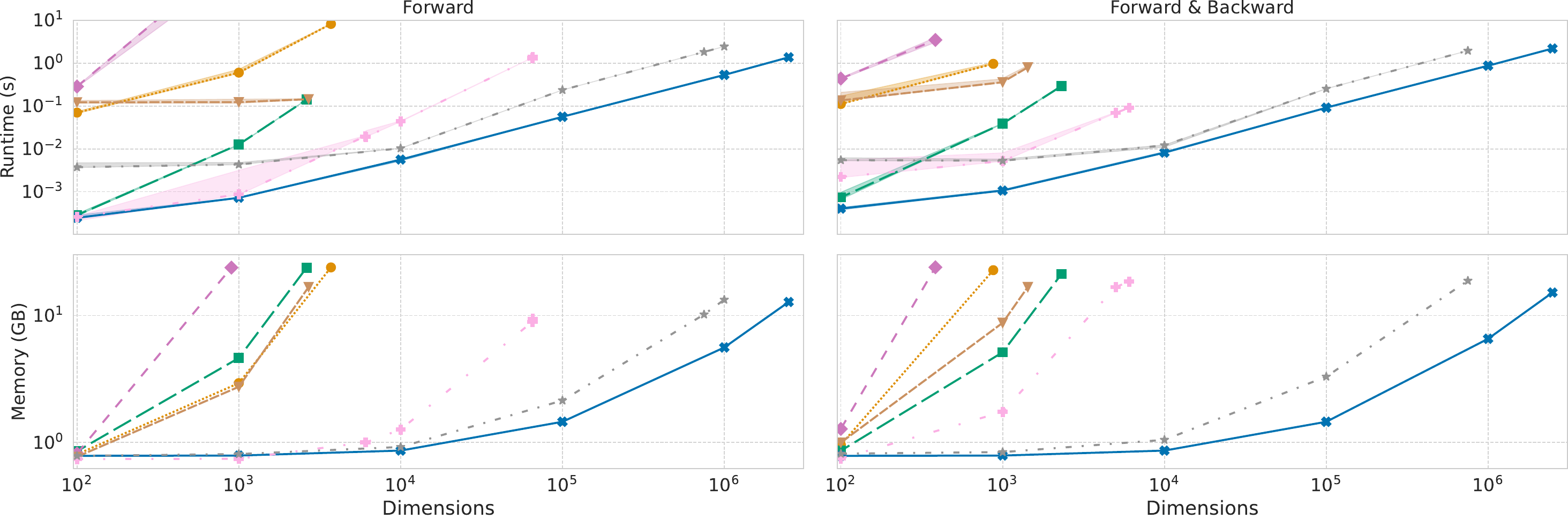}\\
    \includegraphics[width=.85\textwidth]{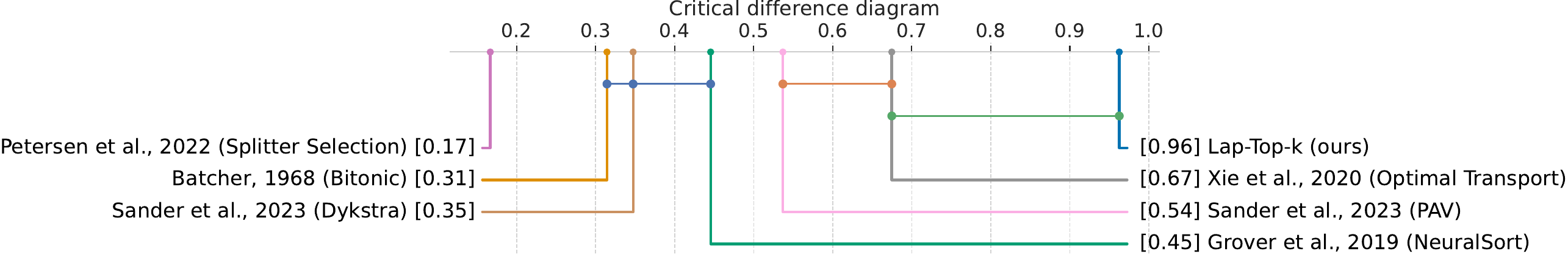}
    \caption{The figure illustrates the correlation between the dimensionality \(n\) on horizontal axis and the peak memory usage as well as computation time, on vertical axis, for various functions with \(k=n/2\). All computations depicted in this graph were executed on a GPU. The analysis focused on memory utilization and runtime during both the forward pass (left column) and the combined forward and backward passes (right column).}
    \label{fig:time_memory_jn_2_GPU}
\end{figure*}

\raggedbottom
\begin{figure*}[t!]
    \centering
    \includegraphics[width=.9\linewidth]{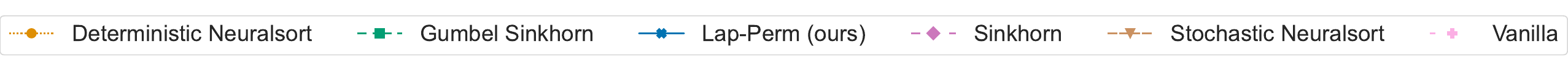}\\
    \includegraphics[width=.9\linewidth]{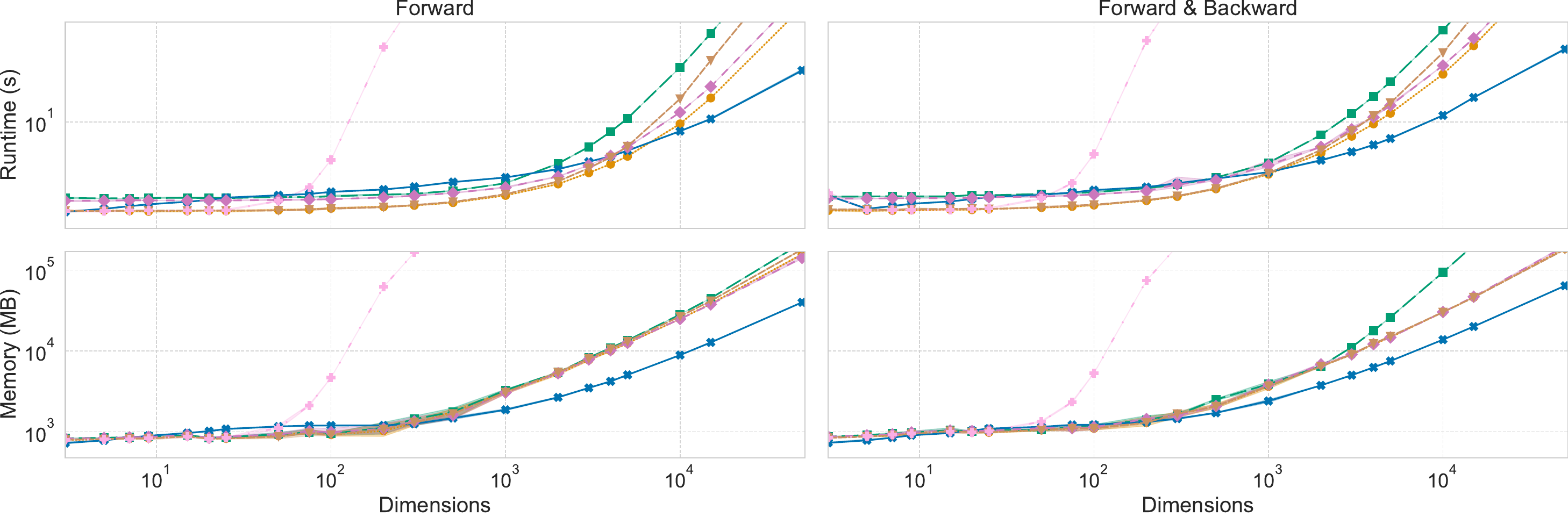}
    \caption{The performance of soft permutations methods, plotting data dimension \(n\) on the horizontal axis versus memory usage and computation time on the vertical axis. Computations were performed on a CPU due to the high memory demands of some of the considered methods. Results are shown for the forward process (left column) and forward-backward process (right column). The implementation of our method is in PyTorch, while the others are implemented in TensorFlow. Our approach demonstrates superior scalability and efficiency, particularly for large \(n\), achieving faster computation times and lower memory usage compared to the other methods.}
    \label{fig:time_memory_sort}
\end{figure*}



\end{document}